\documentclass[11pt]{article}
\usepackage[margin=1in]{geometry}
\usepackage{graphicx}
\usepackage{amsmath, amssymb}
\usepackage{amsthm}  
\allowdisplaybreaks
\usepackage{authblk}
\usepackage{cite}
\usepackage{hyperref}
\usepackage{caption}
\usepackage{subcaption}
\usepackage{float}
\usepackage{xcolor}
\usepackage{algorithm}
\usepackage{algpseudocode}
\usepackage{orcidlink}

\usepackage{tikz}
\usepackage{pgfplots}
\pgfplotsset{compat=newest}

\newtheorem{theorem}{Theorem}[section]     
\newtheorem{proposition}[theorem]{Proposition} 

\definecolor{darkgreen}{RGB}{0,128,0} 

\title{Information Processing Capacity of Stationary Physical Systems: Theory,
Data-efficient Estimation Methods, and Photonic Demonstration}

\author[1]{Rahul Uma Ramachandran,\orcidlink{0009-0009-1955-0671}}
\author[1]{Serge Massar,\orcidlink{0000-0002-4381-2485}}
\affil[1]{Laboratoire d'Information Quantique CP224, Université libre de Bruxelles, Av. F. D. Roosevelt 50, B-1050 Bruxelles, Belgium}

\date{}

\begin{document}

\maketitle

\begin{abstract}

Physical computing systems provide a promising route toward hardware-native machine learning, but their computational capabilities remain difficult to characterize in a principled, task-independent, and data-efficient way. We extend the Information Processing Capacity (IPC) framework to stationary physical computing systems and establish several fundamental results: individual capacities are bounded between zero and one, their sum over a complete basis is bounded by the number of readouts, and noise strictly reduces this bound. We address the finite-sample estimation of IPC and derive the asymptotic form of the systematic positive bias affecting naive estimators.
Building on these results, we introduce data-efficient estimation methods based on Richardson extrapolation and  Sobol quasi-random sampling. We validate the framework experimentally using a photonic computing system based on picosecond laser pulses propagating through a nonlinear optical fibre. By varying the laser power and fibre length, we observe systematic shifts of the IPC distribution toward higher-order nonlinear capacities induced by the Kerr effect.
Finally, we demonstrate that the total IPC strongly correlates with performance on benchmark machine-learning tasks and provides a reliable estimate of the effective dimensionality of the system. These results establish IPC as a practical bridge between the intrinsic dynamics of physical computing systems and their machine-learning performance.

\end{abstract}

\section{Introduction}
 The slowdown of Moore’s law, increasing energy costs, and the growing demand for massively parallel computation driven by machine learning are pushing conventional general-purpose computing systems toward their limits. This has led to a search for alternative computing paradigms that are less affected by these factors. Physical computing is one such promising paradigm. A vast array of physical systems around us naturally performs complex information processing, and physical computing architectures aim to harness these to speed up computations without sacrificing energy efficiency \cite{markovic2020physics,huang2015trends,jaeger2023toward}. During the past years, machine learning has been realized using many different physical substrates and architectures, see e.g.\cite{tanaka2019recent,markovic2020physics,schuman2022opportunities}
 
 However, the exact nature of the computations and nonlinearities performed by most physical systems is complex and therefore difficult to characterize and predict by the experimentalist. As a result, tools to properly measure and classify the computational traits of a physical system are invaluable. Several metrics have been used to benchmark these computational traits, see e.g. \cite{nonlinear_inference_2spiral,edgeofchaos,neurobench,blouw2019benchmarking,principles_and_metrics_elm}. However, many of these do not fully satisfy desirable properties one would expect from such a metric, such as task-independence,  learning-algorithm independence, hardware-platform independence, and data-efficiency. Approaches which use Principal Component Analysis (PCA)\cite{hotelling1933pca,malinowski} to estimate the dimensionality of a system satisfy most of these properties, but provide only limited information about the actual computations performed by the system.  
 
 The Information Processing Capacity (IPC), introduced for linear functions in  \cite{jaeger2002short}  and extended to nonlinear functions in \cite{dambre2012information}  to analyze dynamical systems such as reservoir computers, is a strong candidate for such a metric. The theory of IPC was further developed in \cite{gonon2020memory,infocap_timevar,SAITO2026132128}. IPC has been widely used in both numerical studies, see e.g. \cite{short_ortho,rodan2010minimum,grigoryeva2015optimal,ortin2020delay,infocap_spintronic_bifurcation,mastermemory,infocap_task_specific,koster2023limitations,infocap_quantumspin,jaurigue2025utilizing,gallicchio2018short,nokkala2021gaussian,martinez2023information}, and experimental studies, see e.g. \cite{duport2012all,infocap_singlenode,harkhoe2019task,tsunegi2023information,shen2023deep}, of reservoir computers. Limitations of the  Linear IPC were addressed,  particularly concerning numerical precision in linear networks, in \cite{JMLR:v25:23-0568}, and its sensitivity to input scaling in the case of nonlinear networks in \cite{Ballarin26}. However, the study of the IPC in the context of reservoir computing is complicated by its inherent temporal nature, as the current state of a reservoir depends on its entire input history. This leads to technical difficulties, which sometimes mask the simplicity of the method.
 
Here, we develop the theory and applications of the Information Processing Capacity of stationary physical systems. Stationary systems have wide applicability in machine learning, for instance, in feedforward networks and in Extreme Learning Machines (ELM) \cite{HUANG2006489} (which can be thought of as a single-layer feedforward network). We refer to  \cite{shen2017deep,yao2020fully,wright2022deep,ross2023multilayer} for some experimental implementations of feedforward networks and to \cite{frances2016hardware,pierangeli2021photonic,suprano2024experimental,Zajnulina:25} for some experimental implementations of ELMs.
The IPC of stationary systems has been studied previously in \cite{hu2023tackling}, but with a focus on the specific aspects introduced by sampling noise, such as encountered in quantum systems. 

Our first contribution is to adapt the theory of IPC introduced in \cite{dambre2012information} to stationary systems. This is quite straightforward, since stationary systems can be viewed as a particular case of the general time-dependent framework. We reprove two key results, namely (i) that the capacity $C(y,X)$ to reconstruct any function  is bounded by $0\leq C(y,X) \leq 1$,
where $y$ is the target function and $X=\{X_1,\ldots,X_K\}$ are the $K$ readouts
; and (ii) that the sum of capacities over a complete basis is bounded by the number of readouts $K$.
The simpler theoretical framework of stationary systems 
 enables us to focus on questions that were overlooked in the context of reservoir computing. 

 Our second contribution concerns the effect of noise on the IPC. This question was investigated for linear dynamical systems in \cite{short_ortho} but has not received much further theoretical attention, although it is ubiquitous in experimental systems.
 We show that in the presence of noise, the sum of the capacities over a complete basis is always strictly less than the number of readouts. This result supports the interpretation of the IPC as measuring the total accessible information in a physical system.
 
Our third contribution concerns how to estimate the capacities in practice when a limited number $N$ of data points are available. To this end, we address the systematic positive bias that affects the IPC when the amount of data is finite. This positive bias is particularly problematic when computing the total IPC by summing the capacities over a complete basis, since one can violate -sometimes substantially- the upper bound given by the number of readouts.
A preliminary discussion of this difficulty appeared in the original work \cite{dambre2012information}, and it was revisited in \cite{JMLR:v25:23-0568,SAITO2026132128}. However, \cite{JMLR:v25:23-0568} only addresses the Linear IPC, while the method to correct for the positive bias proposed in \cite{SAITO2026132128} is not data-efficient. 

We show, see also
\cite{SAITO2026132128}, that 
$C_N(y,X) = C(y,X) +  
\Delta(y,X)/{N} + {\mathcal O}\left({N^{-3/2}}\right)$  where $C_N(y,X)$ is the IPC estimated using a finite number $N$ of samples, $\Delta(y,X)\geq 0$ is a positive function of the target $y$ and the readouts $X$. The positivity of $\Delta(y,X)$ is the systematic bias mentioned above. We derive an analytical $y$-dependent upper-bound for $\Delta(y,X)$.
We use this asymptotic form and upper-bound to propose data-efficient estimation techniques that correct this systematic bias while keeping the estimated capacities bounded between $0$ and $1$. 
In addition, we show empirically that the use of Sobol sampling \cite{SOBOL196786}  reduces the errors made when estimating capacities on a finite number $N$ of samples, compared to the use of pseudo-random sampling.

 Turning to practical use, we introduce two useful methods for visualizing capacities. These representations aim to encode the computational traits of a physical system in a single picture. These plots can be used as standard overviews of the potential of physical computing systems to process information, and readily compared with each other.

Our fourth contribution consists of illustrating the above results on a concrete experimental system based on a picosecond pulsed laser propagating through a nonlinear medium. The inputs are encoded by modulating the spectrum of the laser. The pulses are then amplified before propagating through an optical fibre. Finally, the output spectrum is measured, and the power in distinct spectral bins constitutes the output variables $X$. 
 Similar experimental systems have already been used as Extreme Learning Machines in \cite{Lupo:21,Zajnulina:25,principles_and_metrics_elm,programmable_photonic_elm,fission}. 
 We use the above mentioned data-efficient estimation method to evaluate the IPCs.
  We use the two visualization methods introduced earlier to show how the IPCs depend on the controllable experimental parameters (power of the laser and length of the fibre).
  We show that the total IPC correlates with performance on two machine learning tasks, namely PCA reduced MNIST \cite{mnist} and Two Spirals \cite{LangWitbrock1988}. 

Finally, we discuss how the IPCs can be used to estimate the effective dimensionality of the system.
  Recall that the sum of the capacities over a complete basis is upper-bounded by the number of readouts. However, in practice, the sum is strictly less than the upper bound, due to inevitable noise (as mentioned above), finite statistics, and the fact that the infinite basis is truncated to a finite number of terms. It is tempting to nevertheless interpret the sum of the IPCs as an estimate of the effective dimensionality of the system. To validate this interpretation, we compare the total IPC with another metric used to estimate the effective dimensionality of stationary physical systems  based on Principal Component Analysis
 \cite{malinowski}.  For a recent use of this approach in a photonic system, see
 \cite{Skalli:22}. We find good agreement between the two approaches.

 In summary, in the present work, we extend the concept of Information Processing Capacity to stationary systems, obtaining new theoretical results and improved methods to use the IPC in practice. These concepts are then illustrated on an experimental photonic system. The results and methods developed here can also be used, possibly with small adaptations, in the case of dynamical systems and reservoir computers, implying broad applicability beyond the stationary systems studied here.

\section{Principles}

In this section, we adapt the notions introduced in \cite{dambre2012information} to the case of stationary systems.

\subsection{Model of a stationary  system}

We model a stationary system as a black box which can be interacted with through a set of $q$-dimensional inputs $u = (u_1, \ldots, u_q) \in U \subseteq \mathbb{R}^q$. 
Given input $u$, the system settles into a stationary state determined by these inputs. Its response is probed through a set of $K$ readouts $X=\{X_1,X_2,...X_K\}$ which are functions of the inputs:
\begin{align}
    X: \mathbb{R}^q \to \mathbb{R}^K: u \to  X(u).
\end{align}
We are interested in how this system processes the input information through the responses $X(u)$. The average over the inputs is denoted either by $\langle \cdot \rangle$ or $E[\cdot]$ depending on which is most natural.

\subsection{Extreme Learning Machines (ELMs)}

ELMs are machine learning systems closely related to feedforward neural networks, but with a key difference: the hidden layer is fixed and not trained using methods such as backpropagation. Only the output layer is trained, typically by solving a linear regression problem. 
The output  $\hat{y}$ of the ELM is thus obtained by taking linear combinations of the readouts of the stationary  system:
\begin{equation}
    \hat{y}(u) = \sum_{i=1}^{K} W_{i}\, X_{i}(u) .
\end{equation}

Given a target function $y(u)$, the weights $W_i$ are optimized by minimizing the Mean Squared Error (MSE)
\begin{equation}
    \text{MSE}=\langle ( \hat{y} - y)^2 \rangle
    \label{Eq:DefMSE}
\end{equation}
where the average is taken over the inputs $u$.

After optimization, the optimized weights $W^*$, 
the optimized output $y^*$, and
the optimized Mean Squared Error $\text{MSE}^*$ can be written as:
\begin{eqnarray}
  W^*&=&G^{-1}R ,  \label{Eq:DefW*} \\
  y^* &=& W^{*T}X = R^T G^{-1} X ,  \label{Eq:Defy*}\\
        \text{MSE}^*&=&\langle y^2 \rangle -R^T G^{-1}R ,  \label{Eq:DefMSE*}
\end{eqnarray}
where
the correlation vector $R$ is given by
\begin{align}
   R_{i}&=\langle X_{i} y \rangle  \label{Eq:DefR}.
\end{align}
and
the Gram matrix $G$  is given by
\begin{align}
    G_{ij}&=\langle X_i X_j \rangle .  \label{Eq:DefG}
\end{align}

In Eqs. (\ref{Eq:DefW*}, \ref{Eq:Defy*}, \ref{Eq:DefMSE*}) we assumed that the Gram matrix $G$ is full rank. If it is not, we restrict ourselves to its support, i.e., we take the Moore-Penrose pseudo inverse. Except where explicitly stated, we always make this hypothesis.

\subsection{Probability distribution over inputs and Hilbert space}

We now consider an abstract setting in which the inputs $u \in U$ and the target output $y$ are not associated with any real-world task. The inputs $u$ are independently and identically drawn (i.i.d.) from a probability distribution $p(u)$.

 Given a function over the input space, $f: U \rightarrow \mathbb{R}$, its expectation value is 
 \begin{equation}
   E[f]=\int du \; p(u) f(u)
\end{equation}\\
We  consider the Hilbert space $\mathcal{H}$ of square integrable functions, i.e. functions $f: U \rightarrow \mathbb{R}$ such that
\begin{equation}
E[ f^{2}]=\int du\; p(u) f^{2}(u)<\infty
\end{equation}
 For any functions $f, g \in \mathcal{H}$, we have the scalar product:
\begin{equation}
    \langle f, g\rangle=E[fg] = \int du\; p(u) f(u) g(u)
\end{equation}
and the norm square: 
\begin{equation}
    \|f\|^2 = \langle f,f \rangle = E[f^2] .
\end{equation}

We assume the Hilbert space $\mathcal{H}$  is separable, i.e., has a countable basis. Let  $\{ y_l\}$ be an orthonormal basis of $\mathcal{H}$. We have
\begin{align}
    \left\langle y_{l}, y_{l^{\prime}}\right\rangle=\delta_{l l^{\prime}} & \quad \text { (orthonormality) } \\
    \forall f \in \mathcal{H}, f(u)=\sum_{l} c_{l} y_{l}(u), \text { with } c_{l}=\left\langle y_{l}, f\right\rangle & \quad \text { (completeness) }.
    \end{align}
 The scalar product of two functions can be expressed in terms of their coefficients in the orthonormal basis:
 \begin{align}
 \label{eq:scalar_product}  
& \forall f, g \in \mathcal{H} \text { with } f(u)=\sum_{l} c_{l} y_{l}(u), g(u)=\sum_{l} d_{l} y_{l}(u) , \nonumber  \\
 & \langle f, g\rangle=\sum_{l} c_{l} d_{l}
\end{align}

Hereafter, we consider that all functions belong to $\mathcal{H}$. We use either the expectation notation or the scalar product notation, depending on which is more natural in the specific context.

\subsection{Information Processing Capacity of a stationary system}

\subsubsection{ Capacity to reconstruct a target function}

Given a  target function $y(u)$, the capacity  of the stationary system to reconstruct $y$  from its readouts $X$ is defined as:
\begin{equation}
C(y, X) = 1 - \frac{\text{MSE}^*}{\langle y^2 \rangle}.
\label{Eq:Cdef}
\end{equation}
Using \eqref{Eq:DefMSE*}, we have the explicit expression
\begin{equation}
C(y, X) = \frac{R^\top G^{-1} R}{\langle y^2 \rangle} .
\label{Eq:Cexplicit}
\end{equation}

\begin{proposition} 
\label{Prop:CapBound}
The capacity of a stationary system to reconstruct a target function $y$ is bounded by:
\begin{equation}
0 \leq C(y, X) \leq 1
\end{equation}
\end{proposition}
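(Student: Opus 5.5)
The plan is to read off both inequalities from the two expressions for the capacity: the explicit form \eqref{Eq:Cexplicit} gives the lower bound, and the MSE definition \eqref{Eq:Cdef} gives the upper bound. Throughout we assume $\langle y^2\rangle>0$, since otherwise the capacity is undefined. We also work under the paper's standing convention that $G^{-1}$ means the inverse on the support of $G$, i.e.\ the Moore--Penrose pseudo-inverse.

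For the lower bound, we first note that $G$ is positive semidefinite. Indeed, for any $a\in\mathbb{R}^K$ we have $a^\top G a = \langle (\sum_i a_i X_i)^2\rangle \geq 0$ by \eqref{Eq:DefG}. Its (pseudo-)inverse is therefore also positive semidefinite, since it has the same eigenvectors with reciprocal nonzero eigenvalues. Hence $R^\top G^{-1} R\geq 0$. Dividing by $\langle y^2\rangle>0$ in \eqref{Eq:Cexplicit} gives $C(y,X)\geq 0$.

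For the upper bound, we use \eqref{Eq:Cdef}. By definition, $\text{MSE}^*$ is the minimum over weights $W$ of $\langle (W^\top X - y)^2\rangle$. This is the expectation of a square, so it is nonnegative, and therefore $C = 1-\text{MSE}^*/\langle y^2\rangle \leq 1$.

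The only genuinely delicate step is checking that the closed form \eqref{Eq:DefMSE*} really equals this minimum when $G$ is singular. Only then can the two expressions for $C$ be used interchangeably. To handle this, I would write $y = y^* + r$ with $y^* = R^\top G^{-1}X$ as in \eqref{Eq:Defy*}. Since $R$ lies in the range of $G$, we have $\langle X_i r\rangle = R_i - (G G^{-1} R)_i = 0$. This orthogonality is exactly where the pseudo-inverse matters: $R_i=\langle X_i y\rangle$ is a linear functional annihilated by every $a\in\ker G$, because $\sum_i a_i X_i = 0$ almost surely for such $a$, so $R\perp\ker G$, i.e.\ $R\in\operatorname{range} G$. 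It follows that $y^*$ is the orthogonal projection of $y$ onto $\mathrm{span}\{X_i\}$ in $\mathcal{H}$.

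The Pythagorean theorem then gives
\begin{equation*}
\langle y^2\rangle = \langle (y^*)^2\rangle + \langle r^2\rangle, \qquad \langle (y^*)^2\rangle = R^\top G^{-1} G G^{-1} R = R^\top G^{-1}R .
\end{equation*}
Therefore
\begin{equation*}
C(y,X) = \frac{\|y^*\|^2}{\|y\|^2},
\end{equation*}
the squared norm of a projection divided by the squared norm of the vector. Since $0\le \|y^*\|^2\le\|y\|^2$, this yields both bounds at once, which is the cleanest form of the argument.
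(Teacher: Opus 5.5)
Your proof is correct and follows the paper's own argument: the upper bound comes from nonnegativity of $\text{MSE}^*$ in \eqref{Eq:Cdef}, and the lower bound from positive (semi)definiteness of $G$ in \eqref{Eq:Cexplicit}. Your extra check that $R\in\operatorname{range}G$, which shows the pseudo-inverse closed form \eqref{Eq:DefMSE*} is the true minimum and that $C(y,X)=\|P_{\mathcal X}(y)\|^2/\|y\|^2$, adds rigor the paper skips here by assuming $G$ full rank (the paper obtains the projection formula only in the next proposition, again under full rank).
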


\begin{proof} The  $MSE$ is positive, see Eq. \eqref{Eq:DefMSE}. Therefore, from the definition Eq.  \eqref{Eq:Cdef}, we have $C(y, X) \leq 1$. 
The Gram matrix $G$ is symmetric and positive (we assume it is full rank). Hence, using the expression Eq. \eqref{Eq:Cexplicit}, we have $0 \leq C(y, X)$.
\end{proof}

Let
\begin{equation}
\mathcal{X} = \mathrm{span}\{X_1,\dots,X_K\}\subset \mathcal H
\end{equation}
denote the subspace spanned by the readout functions and
denote by 
\begin{equation}
P_{\mathcal X}: \mathcal{H} \to \mathcal{H}: y \to  P_{\mathcal X} (y) \end{equation}
the orthogonal projector onto $\mathcal X$.

\begin{proposition} 
The optimized output $y^*$ is the orthogonal projection of $y$ onto $\mathcal X$:
\begin{equation}
y^* = P_{\mathcal X} (y) .
\end{equation}
 The capacity for a stationary system to reconstruct a function $y$ is the square cosine of the angle between the vector $y$ and its projection on the subspace:
\begin{equation}
 C(y, X) = \frac{   \| P_{\mathcal X}(y)   \|^2 }{  \| y   \|^2}.
\end{equation}
\end{proposition}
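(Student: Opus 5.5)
The plan is to show that the residual $y - y^*$ is orthogonal to every readout. Since $y^*\in\mathcal X$, this characterizes $y^*$ as the orthogonal projection of $y$ onto $\mathcal X$.

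\textbf{Step 1: $y^*$ is the projection.} By Eq.~\eqref{Eq:Defy*}, $y^* = \sum_j W^*_j X_j$, so $y^*\in\mathcal X$. For each readout $X_i$ we compute
\begin{equation}
\langle X_i, y - y^*\rangle = R_i - \sum_j G_{ij} W^*_j = R_i - (G G^{-1} R)_i = 0,
\end{equation}
using the definitions \eqref{Eq:DefR}, \eqref{Eq:DefG} and \eqref{Eq:DefW*}. By linearity, $y-y^*$ is orthogonal to all of $\mathcal X$. The orthogonal decomposition $\mathcal H = \mathcal X\oplus\mathcal X^\perp$ is unique because $\mathcal X$ is finite dimensional and hence closed. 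It follows that $y^* = P_{\mathcal X}(y)$.

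\textbf{Step 2: the capacity formula.} Since $y^* = P_{\mathcal X}(y)$ and $P_{\mathcal X}$ is a symmetric idempotent, we have
\begin{equation}
\|P_{\mathcal X}(y)\|^2 = \langle y^*, y^*\rangle = W^{*T} G W^* = R^T G^{-1} G G^{-1} R = R^T G^{-1} R .
\end{equation}
Dividing by $\langle y^2\rangle = \|y\|^2$ and comparing with Eq.~\eqref{Eq:Cexplicit} gives $C(y,X) = \|P_{\mathcal X}(y)\|^2/\|y\|^2$. To read this as the squared cosine of the angle $\theta$ between $y$ and $P_{\mathcal X}(y)$, we use $\langle y, P_{\mathcal X} y\rangle = \|P_{\mathcal X} y\|^2$. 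Then
\begin{equation}
\cos^2\theta = \frac{\|P_{\mathcal X}y\|^4}{\|y\|^2\|P_{\mathcal X}y\|^2} = \frac{\|P_{\mathcal X}y\|^2}{\|y\|^2},
\end{equation}
which matches the formula above.

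\textbf{Main obstacle: singular $G$.} The only delicate point is the case where $G$ is not full rank and $G^{-1}$ is replaced by the Moore--Penrose pseudo-inverse $G^+$. Here I would argue as follows.

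The Gram matrix is $G = A^*A$, where $A:\mathbb R^K\to\mathcal H$ is the map $w\mapsto \sum_i w_i X_i$. The vector $R = A^* y$ lies in $\mathrm{range}(A^*) = \mathrm{range}(G)$.

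In Step 1 the needed identity is $GG^+R = R$. This holds because $GG^+$ is the orthogonal projector onto $\mathrm{range}(G)$, and $R$ lies in that range.

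Step 2 then goes through unchanged, using $G^+GG^+ = G^+$.
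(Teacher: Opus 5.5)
Your proof is correct, but it takes a different route from the paper's.

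\textbf{Your route.} You verify the normal equations directly: $\langle X_i, y-y^*\rangle = R_i - (GW^*)_i = 0$. So the residual is orthogonal to the closed subspace $\mathcal X$, and $y^*$ must be the projection. The capacity formula then follows from $\|y^*\|^2 = W^{*T}GW^* = R^TG^{-1}R$.

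\textbf{The paper's route.} The paper first orthonormalizes the readouts. It chooses an invertible $\Lambda$ with $\Lambda^T G\Lambda = \mathbb I$ and sets $\tilde X_l = \sum_i X_i\Lambda_{il}$. It then argues that the capacity is invariant under this invertible change of readouts, since the change can be absorbed into the weights. With $\tilde G = \mathbb I$ the optimal weights are simply $\tilde W_i^* = \langle \tilde X_i, y\rangle$, so $y^* = \sum_i \tilde X_i\langle\tilde X_i,y\rangle = P_{\mathcal X}(y)$ and $C(y,X) = \sum_i\langle\tilde X_i,y\rangle^2/\|y\|^2$.

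\textbf{What each buys.} Your argument is more economical: it needs neither the change of basis nor the invariance claim. Your pseudo-inverse paragraph ($R = A^*y \in \mathrm{range}(A^*) = \mathrm{range}(G)$, hence $GG^+R=R$) also covers the rank-deficient case, which the paper's proof excludes by assuming $G$ is full rank. The paper's route, in exchange, produces the representation $C(y,X)=\sum_i \langle \tilde X_i, y\rangle^2/\|y\|^2$ in an orthonormal basis of $\mathcal X$. That representation is reused later: summing over a complete basis $\{y_l\}$ via Parseval to obtain $\dim\mathcal X$, in the noise argument, and in the finite-sample expansion with $\tilde G_N = \mathbb I + \tilde\eta_N$.

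One small point: your cosine computation divides by $\|P_{\mathcal X}y\|^2$, so it presumes $P_{\mathcal X}y\neq 0$. When the projection vanishes the angle is undefined, but both sides of the capacity formula are then simply zero.
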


\begin{proof}
 Assuming the Gram matrix $G$ is full rank, its eigenvalues are all strictly positive. 
 Therefore, there exists an invertible matrix $\Lambda$ such that:
\begin{equation}
\Lambda^\top G \Lambda = \mathbb{I}
\end{equation}
(i.e., $\sum_{ji} \Lambda_{kj} G_{ji} \Lambda_{il} =\delta_{kl})$. We define new variables $\tilde{X} = \{\tilde{X}_l\}$ by:
\begin{equation}
\tilde{X}_l = \sum_i X_i \Lambda_{il}
\label{Eq:tildeX}
\end{equation}
 These satisfy:
\begin{align}
\label{eq:ortho}
\tilde{G}_{ij}&=\langle \tilde{X}_i, \tilde{X}_j \rangle = \delta_{ij} \\
\tilde{R}_{i}&=\langle\tilde{X_i},y \rangle
\end{align}
Thus, the $\tilde{X}_l$ form an orthonormal set in $\mathcal{H}$. 
The orthogonal projector $P_{\mathcal X}$ onto $\mathcal X$ can be expressed as
\begin{equation}
P_{\mathcal X}:  y \to  P_{\mathcal X} (y) = \sum_i \tilde{X}_i \langle \tilde{X}_i, y \rangle 
\end{equation}
and the complementary projector is
\begin{equation}
P^\perp_{\mathcal X} = \mathbb{I} - P_{\mathcal X} .
\end{equation}
Note that using the new basis $\tilde X$ rather than the old basis $X$  leaves the capacity invariant
\begin{equation}
 C(y, X) = C(y, \tilde{X})
\label{Eq:CCequal}
\end{equation}
since any invertible linear transformation of the readouts $X_i$ can be absorbed in the coefficients $W_i$.

The optimized output thus becomes:
\begin{equation}
y^* = \sum_i \tilde{W}_i \tilde{X}_i
\end{equation}
with optimal weights:
\begin{equation}
\tilde{W}_i^{\text{*}} =\tilde{G}_{ij}^{-1}\tilde{R}_{i}= \langle \tilde{X}_i, y \rangle .
\end{equation}
Therefore, the optimized output is the orthogonal projection of $y$ onto $\mathcal X$:
\begin{equation}
y^* = P_{\mathcal X}(y) .
\end{equation}
Furthermore, using the fact that $P_{\mathcal X}^2=P_{\mathcal X}$, we have:
\begin{eqnarray}
C(y, \tilde{X}) &=& \frac{ \sum_i  \langle \tilde{X}_i, y \rangle^2}{\langle y^2\rangle}\label{Eq:CtildeX}\\
&=&  \frac{ \langle y, P_{\mathcal X}(y)\rangle }{\langle y^2\rangle}
\nonumber\\
&=&  \frac{ \langle  P_{\mathcal X}(y) , P_{\mathcal X}(y)\rangle }{\langle y^2\rangle}
\nonumber\\
&=&  \frac{  \|  P_{\mathcal X}(y)  \| ^2 }{ \|  y  \|^2} .
\end{eqnarray}
\end{proof}

\subsubsection{Upper bound on sums of capacities}
\label{sec:capsum_bound}

The sum of the capacities over a complete orthonormal basis is bounded by the number of readouts $K$.

\begin{proposition}  
\label{Prop:SumCap}
Let $X = \{X_i \in \mathcal{H}, i = 1, ..., K\}$ be the set of $K$ readouts of a stationary system, $\mathcal X$ the space spanned by the $X_i$, and let $ \{y_l\}$ be a complete orthonormal basis of $\mathcal{H}$. Then  we have
\begin{equation}
\sum_{l} C(y_l, X) =\dim(\mathcal X) \leq K\ ,
\end{equation}
with equality if and only if the  $K$ readouts are linearly independent.
\end{proposition}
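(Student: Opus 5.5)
The plan is to reduce the sum of capacities to the trace of the orthogonal projector $P_{\mathcal X}$, computed in two different orthonormal bases. Set $d=\dim(\mathcal X)$. Since the $X_i$ span $\mathcal X$, we have $d\le K$, and $d=K$ exactly when the $X_i$ are linearly independent; this settles the inequality and its equality case at once. What remains is the identity $\sum_l C(y_l,X)=d$.

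To handle a possibly rank-deficient $G$ (the pseudo-inverse convention), I work directly with $\mathcal X$. I pick an orthonormal basis $\tilde X_1,\dots,\tilde X_d$ of $\mathcal X$, for instance by Gram--Schmidt on the readouts, or by the construction $\tilde X_l=\sum_i X_i\Lambda_{il}$ in the proof of the preceding proposition restricted to the support of $G$. Invariance of the capacity under invertible linear recombination of readouts, Eq.~\eqref{Eq:CCequal}, applies on the support, since the pseudo-inverse solution still gives the orthogonal projection. By the previous proposition and Eq.~\eqref{Eq:CtildeX}, together with $\|y_l\|=1$, this gives
\begin{equation}
C(y_l,X)=\|P_{\mathcal X}(y_l)\|^2=\sum_{i=1}^{d}\langle \tilde X_i,y_l\rangle^2 .
\end{equation}

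Next I sum over $l$ and exchange the order of summation. All terms are nonnegative, so Tonelli's theorem for series justifies the exchange even though the sum over $l$ is infinite. Then
\begin{equation}
\sum_l C(y_l,X)=\sum_{i=1}^{d}\sum_l \langle \tilde X_i,y_l\rangle^2=\sum_{i=1}^{d}\|\tilde X_i\|^2=d ,
\end{equation}
where the middle equality is Parseval's identity, i.e.\ Eq.~\eqref{eq:scalar_product} with $f=g=\tilde X_i$ expanded in the complete basis $\{y_l\}$.

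The main point needing care is this interchange of the infinite sum over $l$ with the finite sum over $i$, together with the use of completeness of $\{y_l\}$. Positivity of every term handles the interchange, and completeness is exactly what makes Parseval's identity hold. A secondary point is stating cleanly that the rank-deficient case reduces to an orthonormal basis of $\mathcal X$ of size $d<K$, which is what gives strict inequality when the readouts are linearly dependent.
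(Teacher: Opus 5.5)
Your proof is correct and follows essentially the same route as the paper: you pick an orthonormal basis $\tilde X_1,\dots,\tilde X_d$ of $\mathcal X$, write each capacity as $\sum_i \langle \tilde X_i, y_l\rangle^2$, interchange the sums, and apply Parseval to obtain $\dim(\mathcal X)$. Your remarks on the pseudo-inverse (rank-deficient) case, the justification of the interchange, and the dimension count for the equality case fill in details the paper leaves implicit.
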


\begin{proof} Let $\tilde{X}$ be the orthonormal functions constructed from $X$ as in \eqref{Eq:tildeX}, thus obeying \eqref{eq:ortho}. 
Let $K' \leq K$ be the dimension of $\mathcal X$. Then there exists an orthonormal basis
\(
\{\tilde X_i\}_{i=1}^{K'}
\)
of \(\mathcal X\).
We have
\begin{align}
\sum_{l} C(y_l, X) &= \sum_{l}  C(y_l, \tilde{X}) \notag \\
&= \sum_{l}  \sum_{i=1}^{K^{'}} \langle \tilde{X}_i, y_l \rangle^2 \notag \\
&= \sum_{i=1}^{K'} \left( \sum_{l}  \langle \tilde{X}_i, y_l \rangle^2 \right) \notag \\
&= \sum_{i=1}^{K'}  \|\tilde X_i\|^2  \notag \\
&= K'
\end{align}
where we have used Eq.~\eqref{eq:scalar_product}.
\end{proof}

In practice, one cannot measure all the capacities $C(y_l, X)$ since there are an infinite number of them. However, one can select a subset, say the $L$ first basis functions, and estimate $\sum_{l=1}^L C(y_l, X)$. This is an important quantity, as it can be interpreted as the effective dimensionality of the stationary system.

\subsubsection{Example based on Legendre polynomials}
\label{subsubsec:examp_legendre}
As an example, consider first the case where we have only a single input (i.e., the input dimension is $q = 1$). We take $p(u)$ to be the uniform distribution over the interval $[-1, +1]$. We then take as an orthonormal basis the Legendre polynomials $\mathcal{P}_{l}(u)$
normalized such that:
\begin{align}
    \frac{1}{2}\int_{-1}^{1} \mathcal{P}_i(u)  \mathcal{P}_j(u)  du= \delta_{ij}
\end{align}

 For multidimensional inputs $(q > 1)$ we take each component of the input
 $u = (u_1, \ldots u_q)$ to be independently and identically distributed (i.i.d.), that is, u is uniformly distributed over the hypercube $[ -1, +1]^q$. We then take the basis to be the products of these Legendre polynomials:
\begin{equation}
    y_{l_1 l_2\cdots l_q } (u)= \mathcal{P}_{l_1}(u_1) \mathcal{P}_{l_2}(u_2) \cdots \mathcal{P}_{l_q}(u_q)
\end{equation}
normalized such that:
\begin{equation}
  \langle y_{l_1 l_2\cdots l_q } , y_{l'_1 l'_2\cdots l'_q }  \rangle=
\delta_{l_1 l'_1}\cdots   \delta_{l_q l'_q}  
\end{equation}
This is the basis used throughout the paper.

For q-dimensional inputs, if we restrict to basis functions satisfying
\[
l_1+\cdots+l_q\le d_{\max}
\], i.e., to having a total degree less than or equal to $d_{max}$, then the number of basis functions is
\begin{equation}
    n_{basis}=\binom{q+d_{max}}{d_{max}} .
\end{equation}

In addition to Legendre polynomials, one can also use any other set of orthonormal functions together with the associated probability distribution, see   \cite{infocap_timevar} for a discussion.

\section{Visualization of capacities}

Before discussing the capacities further,  we introduce two visualization schemes that allow the capacities to be visualised and interpreted. Examples of the proposed visualizations are provided in Fig. \ref{Fig:IllustrativeCapacities}.

\begin{figure}[H]
    \centering
    \includegraphics[width=0.9\textwidth]{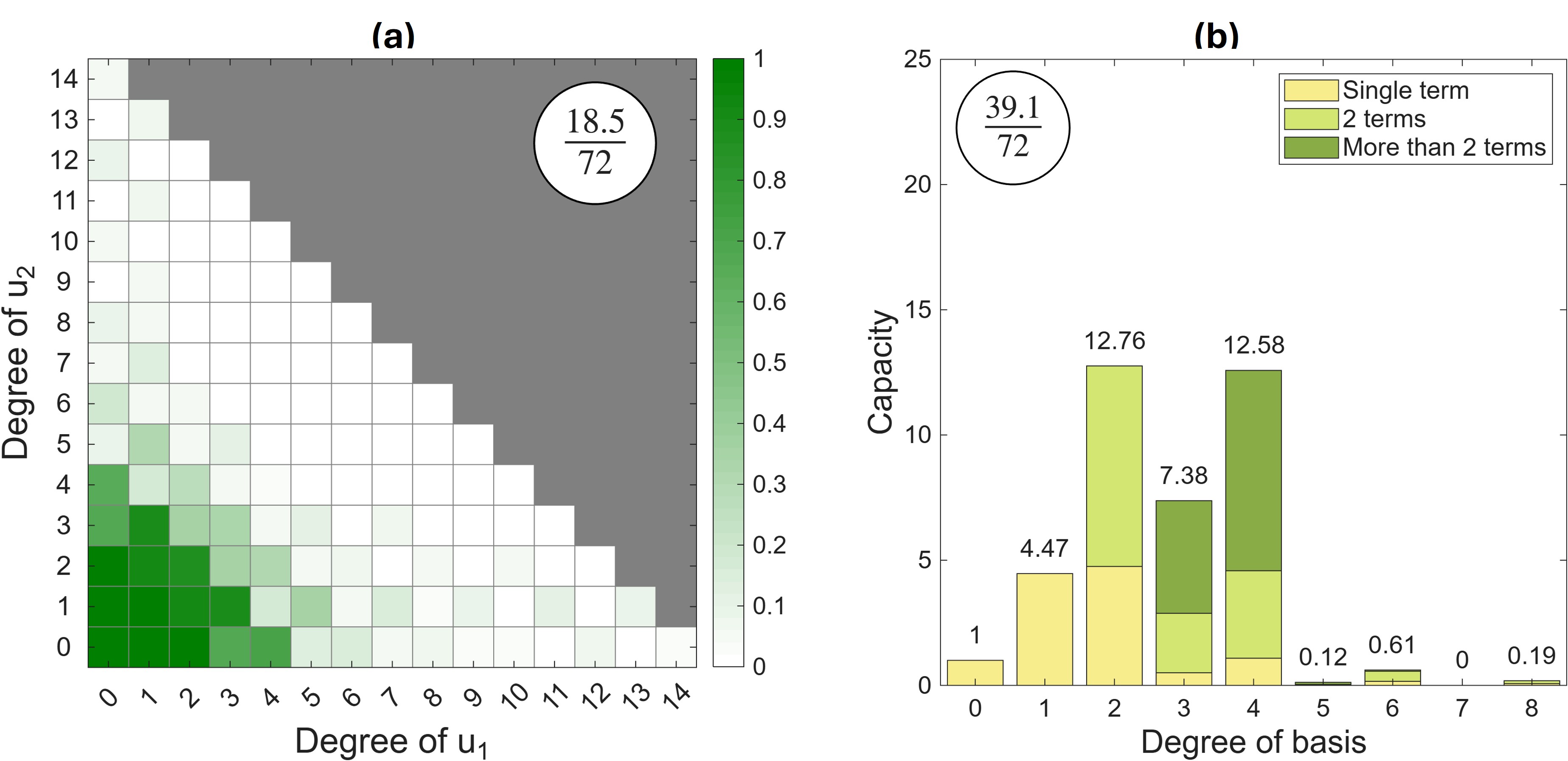}
    \caption{Illustration of capacity visualizations: (a) Capacity matrix when the input dimensionality is $q=2$; (b) Capacity Bar-plot used for high-dimensional input, here the dimensionality is $q=5$. We use the Legendre polynomials and their products as an orthonormal basis, see Section \ref{subsubsec:examp_legendre}. The data is obtained using the experimental system described in Section \ref{Sec:Exp-Sys} using average power=-1.9 dBm and fiber length =40m.}
    \label{Fig:IllustrativeCapacities}
\end{figure}

\begin{enumerate}
    \item \textbf{Capacity matrix}\\
    When the inputs are 2-dimensional $u=\{u_1,u_2 \}$, the basis used consists  of products of two Legendre polynomials $\mathcal{P}_{l_1}(u_1) \mathcal{P}_{l_2}(u_2)$. The corresponding capacities can therefore be visualized as a matrix, where the row and column correspond to degrees of $l_1$ and $l_2$ of each polynomial. Each matrix element is shown as a color gradient according to the capacity of that specific term. For example, if the indexing starts at 0, the $(0,4)$ element corresponds to the basis $\mathcal{P}_0(u_1) \cdot \mathcal{P}_4(u_2)=P_4(u_2)$ while  $(2,3)$ corresponds to $\mathcal{P}_2(u_1) \cdot \mathcal{P}_3(u_2)$. The (0,0) entry corresponds to the constant term $\mathcal{P}_0=1$. The capacity matrix in our case is always lower triangular because we set a limit on the maximum degree of the basis (here, 14), and all terms above the diagonal correspond to higher degrees. We indicate these invalid cells in gray.
    \item \textbf{Capacity bar-plot}\\
    For higher-dimensional inputs, capacities corresponding to bases with the same total degree can be grouped together to visualize how capacity is distributed across different degrees. To do this, we construct a bar plot where the x-axis represents the total degree of the basis polynomial and the y-axis shows the sum of capacities of all terms with that total degree.
    
    For example, for a three-dimensional input $u = \{u_1, u_2, u_3\}$, the bases 
    $\mathcal{P}_1(u_1)\mathcal{P}_2(u_2)\mathcal{P}_3(u_3)$ and 
    $\mathcal{P}_2(u_1)\mathcal{P}_2(u_2)\mathcal{P}_2(u_3)$ both have total degree $6$ and therefore contribute to the same bar.
    
    Each bar is further divided into three categories based on the number of input variables involved:
    \begin{enumerate}
        \item Single-term contributions (e.g., $\mathcal{P}_6(u_1)$),
        \item Two-term interactions (e.g., $\mathcal{P}_4(u_1)\mathcal{P}_2(u_2)$),
        \item Higher-order interactions involving more than two terms (e.g., $\mathcal{P}_1(u_1)\mathcal{P}_2(u_2)\mathcal{P}_3(u_3)$).
    \end{enumerate}
    This highlights how the computational properties of the system are distributed across nonlinear orders. 

\end{enumerate}
In all cases, the Total Capacity is represented as a fraction: $\frac{\textbf{Total Capacity}}{\textbf{Maximum Capacity}}$ inside a circle, where {\bf Total Capacity} is the sum of all the measured capacities, and {\bf Maximum Capacity} equals the number $K$ of readouts.

\section{Effect of noise on the capacities}

\subsection{General case}

 Noise is inevitable in experimental systems. By noise, we mean uncontrolled fluctuations that are statistically independent of the inputs and vary across repeated measurements at a fixed input. Here, we show that the presence of noise decreases the capacities. We first prove a general result which is independent of the noise model, and then specialize to the case of additive noise. The effect of noise on the linear memory capacity of time-dependent systems was studied previously in \cite{short_ortho}.

In the presence of noise, the readouts depend on both the input $u$ and a noise variable $\epsilon\in \mathbb{R}^M$ which, for simplicity, is assumed to belong to a finite-dimensional space of dimension $M$:
\begin{equation}
X: \mathbb{R}^q\times\mathbb{R}^M \to \mathbb{R}^K: (u,\epsilon) \to  X(u,\epsilon) .
\end{equation}

The input variables $u$ and the noise variables $\epsilon$ are, by assumption, independent. 

It is mathematically convenient to treat the noise as an additional input variable, even though its realizations are neither accessible nor controllable.
Mathematically, this puts the noise on the same footing as the input $u$. But since we do not know the value of the noise variable $\epsilon$, we need to average over it at the end of all computations.

We therefore suppose that there exists a probability distribution over the noise $p(\epsilon)$ which defines the corresponding noise Hilbert space $\mathcal{H}_\epsilon$, and an orthonormal basis over the noise space which we denote $z_m(\epsilon)$,  $m\in \mathbb{N}$. The readouts now belong to the tensor product of the Hilbert spaces $X_i\in \mathcal{H}\otimes\mathcal{H}_\epsilon $. Independence means that the probability distribution over inputs and noise is the product probability distribution $p(u)p(\epsilon)$.

We call an ELM {\em noisy} if at least one of the readouts has a nontrivial dependence on the noise variable $\epsilon$, i.e., cannot be expressed as a function of the input $u$ alone. Experimentally, this corresponds to the fact that if we repeat the same experiment with the same input $u$, we will not get exactly the same response $X_i$. The variability is due to the noise. 

When the readouts of a stationary system $\{X_i\}_{i=1}^K$ are linearly independent, the sum of capacities over a complete basis $\{y_l(u)\}$ of $\mathcal{H}$ is equal to $K$ as proven in Prop. \ref{Prop:SumCap}. This is no longer the case for noisy ELMs.

\begin{proposition}
For a noisy ELM, the sum of the capacities over an orthonormal basis $\{ y_l\}$ of the input Hilbert space $ \mathcal{H} $ is strictly less than the total number of readouts
\begin{equation}
  \sum_l   C(y_l,X) < K .
  \label{Eq:SumCapNoise}
\end{equation}
\label{Prop:noise}
\end{proposition}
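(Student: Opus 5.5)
We embed the problem in the enlarged Hilbert space $\mathcal H\otimes\mathcal H_\epsilon$, where Proposition~\ref{Prop:SumCap} applies directly, and show that the input-only basis functions miss some of the readout subspace. The functions $\{y_l(u)z_m(\epsilon)\}_{l,m}$ form a complete orthonormal basis of $\mathcal H\otimes\mathcal H_\epsilon$; we take $z_0=1$ (the constant function, which is normalized since $p(\epsilon)$ is a probability distribution). The target functions $y_l(u)$ are precisely the elements $y_l z_0$ of this basis. The capacity $C(y_l,X)$ is computed with the joint distribution $p(u)p(\epsilon)$, which is exactly the MSE-based definition \eqref{Eq:Cdef} averaged over the noise. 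By Proposition~\ref{Prop:SumCap} applied in the product space,
\begin{equation}
\sum_{l,m} C(y_l z_m, X) = \dim(\mathcal X) \le K .
\end{equation}
Capacities are nonnegative by Proposition~\ref{Prop:CapBound}. Hence
\begin{equation}
\sum_l C(y_l,X) = \dim(\mathcal X) - \sum_{l}\sum_{m\ge 1} C(y_l z_m,X),
\end{equation}
and it suffices to show that at least one term with $m\ge1$ is strictly positive.

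For this step we use the orthonormalized readouts $\tilde X_i$ of \eqref{Eq:tildeX}. Then $\sum_{l}\sum_{m\ge1}C(y_lz_m,X)=\sum_i\|P^\perp\tilde X_i\|^2$, where $P^\perp$ is the orthogonal projector onto the complement of $\mathcal H\otimes z_0$. That complement is exactly the set of functions with zero conditional mean $E_\epsilon[\,\cdot\,|u]$. So we need some $\tilde X_i$ that is not a function of $u$ alone.

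Noisiness says some readout $X_j$ is not a function of $u$ alone. Since $X_j$ lies in $\mathcal X=\mathrm{span}\{\tilde X_i\}$ (taking the pseudo-inverse and the support when $G$ is singular), if every $\tilde X_i$ were a function of $u$ only, then so would be $X_j$, a contradiction. Therefore $P^\perp\tilde X_i\neq0$ for some $i$, and this gives the strict inequality \eqref{Eq:SumCapNoise}.

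The main obstacle is making this rigorous in the $L^2$ sense. ``Nontrivial dependence on $\epsilon$'' must be read as $X_j\neq E_\epsilon[X_j|u]$ in $\mathcal H\otimes\mathcal H_\epsilon$, i.e.\ on a set of positive measure. We must also check that the tensor-product basis is complete under the separability assumptions, so that the Parseval step in Proposition~\ref{Prop:SumCap} holds. Both are standard facts for product measures.
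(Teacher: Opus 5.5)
Your proof is correct and follows the paper's route: embed in $\mathcal{H}\otimes\mathcal{H}_\epsilon$ with a constant noise basis function, apply Proposition~\ref{Prop:SumCap} there, and use Parseval to show that some orthonormalized readout has nonzero weight on the components $y_l z_m$ with $z_m$ non-constant. The differences are cosmetic. You argue by contradiction that some $\tilde X_i$ must depend on $\epsilon$, whereas the paper directly chooses $\tilde X_1 = X_1/\|X_1\|$. You also handle linearly dependent readouts uniformly through $\dim\mathcal{X}$ instead of as a separate case.
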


\begin{proof}
We choose the orthonormal basis of noise functions $\{z_m\}$ such that $z_1=1$ is the constant function, corresponding to the absence of noise.

An orthonormal basis of the space  $\mathcal{H}\otimes\mathcal{H}_\epsilon $ is given by all the products $y_l (u) z_m(\epsilon)$.
Therefore, we have 
\begin{equation}
    \sum_{l,m} C(y_l z_m ,X)=K
\end{equation}
where we have assumed the readouts $X_i$ are linearly independent (otherwise we already have a strict inequality in Eq.
\eqref{Eq:SumCapNoise}).

Because the stationary system is noisy, there is at least one of the readouts, say $X_1$, which has a nontrivial dependence on the noise $\epsilon$.

Since $X_1$ belongs to $\mathcal X$, we may construct an orthonormal basis $\{\tilde X_i\}$ of $\mathcal X$ such that
\[
\tilde X_1 = \frac{X_1}{\|X_1\|}
\]
 and thus has a nontrivial dependence on $\epsilon$. 
 We can expand this basis function as
\begin{eqnarray}
  \tilde{X}_1 &=&\sum_{l,m} c_{lm} y_l z_m\nonumber\\
  &=& \sum_l c_l y_l + \sum_{l,m\neq 1} c_{lm} y_l z_m 
\end{eqnarray}
where we used that $z_1=1$.
The hypothesis that $X_1$ is noisy implies that at least one of the coefficients $c_{lm}, m\neq 1$ is nonzero.
that is, part of the norm of $\tilde X_1$ lies outside the input Hilbert space $\mathcal H$.
Since $\tilde X_1$ is normalised, we have  $\sum_{lm} \vert c_{lm} \vert^2 =1$, and therefore
\begin{equation}
  \sum_l \langle \tilde X_1 , y_l \rangle^2 =  \sum_l c_l^2 <1
\label{Eq:AAA1}
\end{equation}
with strict inequality.
For all other $\tilde X$ we have
\begin{equation}
  \sum_l \langle \tilde X_i , y_l \rangle^2 \leq 1
  \quad i\neq 1
  \label{Eq:AAA2}
\end{equation}
where one would have equality if the corresponding $\tilde X_i$ is not affected by noise.

We now repeat the proof of Prop. \ref{Prop:SumCap}:
\begin{align*}
\sum_{l} C(y_l, X) &= \sum_{l}  C(y_l, \tilde{X}) \\
&= \sum_{l}  \sum_{i=1}^{K} \langle \tilde{X}_i, y_l \rangle^2 \\
& =  
\sum_{i=1}^{K} \left( \sum_{l}  \langle \tilde{X}_i, y_l \rangle^2 \right) \\
&< K
\end{align*}
where we have used Eqs. \eqref{Eq:AAA1} and \eqref{Eq:AAA2}.
\end{proof} 

\subsection{Additive noise}
The preceding section showed that in the presence of noise, the capacities no longer saturate the bound of Prop. \ref{Prop:SumCap}.
As an illustration, we consider here the specific case of additive noise:
\begin{equation}
    \label{eq:X_noise}
    X_i^{\nu}(u ,\epsilon)=X_i (u)+ \nu_i(\epsilon) .
\end{equation}
and denote $X^\nu=\{X_i^{\nu} \}$ the set of noisy readouts.

We have 
\begin{eqnarray}
    E[X_i \nu_j] &=& 0\nonumber\\
    E[\nu_i y] &=& 0 \nonumber\\
    E[\nu_i  \nu_j] &=& \mathcal{V}_{ij}
\end{eqnarray}
where the expectation is taken over both inputs $u$ and noise $\epsilon$, and $\mathcal{V}$ is the covariance matrix of the noise. 
Denoting $G, R$ and $G^\nu , R^\nu$ the Gram matrix and correlation vector in the absence (respectively presence) of noise, we have 
$G^\nu = G + \mathcal{V}$ and $R^\nu = R$. 

\begin{proposition}
In the case of additive noise, if the Gram matrix $G$ in the absence of noise is full rank, if the covariance matrix $\mathcal{V}$ of the noise is full rank, then for any target function $y$ with nonzero capacity, the capacity in the presence of noise $C(y,X^\nu)$ is strictly smaller than in the absence of noise $C(y,X)$.
\end{proposition}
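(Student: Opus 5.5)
We will work directly with the explicit formula Eq.~\eqref{Eq:Cexplicit}. Since $R^\nu=R$ and the normalization $\langle y^2\rangle$ is unchanged by noise (the target depends only on $u$), the statement reduces to the matrix inequality
\begin{equation}
R^\top (G+\mathcal V)^{-1} R \;<\; R^\top G^{-1} R
\end{equation}
for every vector $R\neq 0$. The hypothesis that $y$ has nonzero capacity means exactly $R^\top G^{-1}R>0$, and since $G^{-1}$ is positive definite this is equivalent to $R\neq 0$.

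The key step is the operator inequality $G^{-1}-(G+\mathcal V)^{-1}\succ 0$ whenever $G\succ 0$ and $\mathcal V\succ 0$. Both $G$ and $\mathcal V$ are symmetric positive semidefinite, being Gram or covariance matrices, and they are positive definite by the full-rank assumptions. We then proceed in three steps.

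First, we write $G+\mathcal V = G^{1/2}(\mathbb I + A)G^{1/2}$ with $A=G^{-1/2}\mathcal V G^{-1/2}$. The matrix $A$ is symmetric positive definite, because it is congruent to $\mathcal V$.

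Second, this factorization gives
\begin{equation}
G^{-1}-(G+\mathcal V)^{-1}=G^{-1/2}\bigl(\mathbb I-(\mathbb I+A)^{-1}\bigr)G^{-1/2}.
\end{equation}
Diagonalize $A$ with eigenvalues $a_k>0$. The middle factor then has eigenvalues $1-\frac{1}{1+a_k}=\frac{a_k}{1+a_k}>0$, so it is positive definite. Congruence by the invertible matrix $G^{-1/2}$ preserves positive definiteness.

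Third, evaluating the quadratic form at $R\neq0$ gives a strictly positive number. Dividing by $\langle y^2\rangle$ yields $C(y,X^\nu)<C(y,X)$.

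One could instead use the Woodbury identity, $G^{-1}-(G+\mathcal V)^{-1}=G^{-1}(G^{-1}+\mathcal V^{-1})^{-1}G^{-1}$, which is manifestly positive definite. Either route works.

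The main obstacle is bookkeeping rather than analysis. We must make sure $R^\nu=R$ and $G^\nu=G+\mathcal V$ hold, which requires the cross terms $E[X_i\nu_j]$ to vanish. The paper lists this vanishing in the display preceding the statement; it would follow from independence of $u$ and $\epsilon$ if the noise has zero mean. We must also make sure $\langle y^2\rangle$ is the same in both settings. The strictness hinges on $\mathcal V$ being full rank. With merely semidefinite $\mathcal V$ one only obtains $\le$, and equality can occur when $G^{-1}R$ lies in the kernel of $\mathcal V$. This is why both rank hypotheses appear in the statement.
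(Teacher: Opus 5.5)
Your proposal is correct and follows the paper's proof. Both reduce the claim, via $R^\nu=R$ and $G^\nu=G+\mathcal V$, to $R^\top(G+\mathcal V)^{-1}R<R^\top G^{-1}R$ for $R\neq 0$, using the operator inequality $G^{-1}\succ(G+\mathcal V)^{-1}$; the only difference is that you prove this inequality (by congruence or Woodbury), whereas the paper simply cites Horn and Johnson.
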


\begin{proof}

 By hypothesis
 $G\succ 0$ and $\mathcal{V}\succ 0 $ are strictly positive. This implies that $
     G^{-1} \succ  (G + \mathcal{V})^{-1}$, see  \cite{Horn_Johnson_1985}.

     Since the capacity $C(y,X^\nu) >0$ is nonzero, the correlation vector $R$ is nonzero.
     
We then have:
\begin{align}
    C(y,X^\nu) &=\frac{R^{\nu T} G^{\nu -1} R^\nu}{\langle y^2 \rangle}\nonumber\\
    &= \frac{R^{T} \left( G + {\mathcal{V}} \right)^{ -1} R}{\langle y^2 \rangle}
    \nonumber\\
    &< \frac{ R^{T} \left( G \right)^{ -1} R}{\langle y^2 \rangle}  \nonumber\\
    &= C(y,X).
\end{align}  
\end{proof}

\section{Finite number of samples}\label{Sec:FiniteN}

In practice, we cannot evaluate the capacities exactly. Rather, we have a finite number of samples $N$. This introduces a systematic positive bias when estimating the capacities. These were previously discussed in \cite{dambre2012information,JMLR:v25:23-0568,SAITO2026132128,infocap_timevar}. We revisit this issue in the simpler case of stationary systems.

When we have a finite number of samples, the definitions in Eqs.
\eqref{Eq:DefMSE} to \eqref{Eq:DefG} are unchanged, but the averages are now the empirical averages over the $N$ samples.
We denote all quantities evaluated for a finite number $N$ of samples with the subscript $N$.

\begin{proposition} 
\label{Prop:CapN}
Given $N$ samples $(u(n), X(n), y(n))$, $n=1\ldots N$, the  capacity $C_N(y)$ of a stationary system to reconstruct a function $y$ is bounded by:
\begin{equation}
0 \leq C_N(y, X) \leq 1 .
\label{Eq:CNbound}
\end{equation}
If the number of samples is less than the number of variables, $N\leq K$, and the $N$ sample vectors $X_i(n)\in\mathbb R^K$ are linearly independent, then the empirical capacity is $1$:
\begin{equation}
C_N(y, X) = 1 \quad (N\leq K) .
\end{equation}
\end{proposition}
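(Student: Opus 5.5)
The plan is to treat the empirical setting as a special case of the Hilbert-space framework of Section 2, using the empirical measure in place of $p(u)$. Concretely, I take the $N$ samples and regard functions as vectors in $\mathbb R^N$, with scalar product $\langle f,g\rangle_N=\frac1N\sum_{n=1}^N f(n)g(n)$. The empirical Gram matrix $G_N$ and correlation vector $R_N$ are then exactly the Gram matrix and correlation vector of the readout vectors $X_i=(X_i(1),\dots,X_i(N))\in\mathbb R^N$ and the target vector $y\in\mathbb R^N$ in this finite-dimensional Hilbert space. Hence the empirical ELM is a stationary system over a finite input distribution (uniform on the $N$ samples), and $C_N(y,X)=1-\text{MSE}^*_N/\langle y^2\rangle_N$.

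For the bound \eqref{Eq:CNbound}, I repeat the argument of Prop.~\ref{Prop:CapBound} verbatim. The empirical MSE is a mean of squares, so it is nonnegative, which gives $C_N\le 1$. The empirical $G_N$ is a sum of outer products $X(n)X(n)^\top/N$, hence positive semidefinite. With the Moore--Penrose pseudo-inverse restricted to its support, $R_N^\top G_N^{-1}R_N\ge 0$, which gives $C_N\ge 0$. Alternatively, the orthogonal projection characterization of the second proposition gives $C_N=\|P_{\mathcal X_N}y\|_N^2/\|y\|_N^2\in[0,1]$ directly. That route handles the rank-deficient case cleanly, since the projection is defined on the span regardless of rank. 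One caveat: I assume $\langle y^2\rangle_N>0$, otherwise $C_N$ is undefined.

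For the case $N\le K$, I read the hypothesis as saying that the $N$ sample vectors $X(n)\in\mathbb R^K$ are linearly independent. Equivalently, the $K\times N$ data matrix has rank $N$, so its columns span $\mathbb R^K$'s $N$-dimensional subspace, and its rows, the readout vectors $X_i\in\mathbb R^N$, span all of $\mathbb R^N$. Thus $\mathcal X_N=\mathbb R^N$, so $P_{\mathcal X_N}y=y$ and $C_N=1$. Equivalently, the system $\sum_i W_iX_i(n)=y(n)$ for $n=1,\dots,N$ has an exact solution, so $\text{MSE}^*_N=0$.

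The main obstacle is bookkeeping rather than depth: $G_N$ is singular when $N<K$, so the formula \eqref{Eq:Cexplicit} must be interpreted with the pseudo-inverse. I would avoid this issue by working with the MSE/projection form, where singularity is harmless. I would also need to justify that the pseudo-inverse expression coincides with the projection. This holds by the standard fact that $X^\top(XX^\top)^+X$ is the projector onto the row space.
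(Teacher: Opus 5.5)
Your proposal is correct and follows essentially the same argument as the paper. The upper bound comes from nonnegativity of the empirical MSE, the lower bound from positive semidefiniteness of $G_N$, and $C_N=1$ from exact solvability of $\sum_i W_i X_i(n)=y(n)$ (so $\text{MSE}^*_N=0$) when the $N$ sample vectors are linearly independent. Your extra care about the pseudo-inverse/projection form and the requirement $\langle y^2\rangle_N>0$ just makes explicit points the paper leaves implicit.
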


\begin{proof} The  argument is the same as in the proof of Proposition \ref{Prop:CapBound}: since the  MSE is positive, we have, using the definition Eq.  \eqref{Eq:Cdef}, that $C_N(y, X) \leq 1$; and since the 
Gram matrix $G_N$ is symmetric and positive semidefinite, we have, using Eq. \eqref{Eq:Cexplicit}, that $0 \leq C_N(y, X)$.

Since the sample vectors are linearly independent, when $N \leq K$, we can solve for $W_i$ the system 
\begin{equation}
\sum_{i=1}^K W_i X_i (n) = y(n) .
\end{equation}
Using this solution, the output $\hat y = \sum_{i=1}^K W_i X_i $ is equal to the target output $y$, and the MSE vanishes. Hence, from Eq.  \eqref{Eq:Cdef}, we have $C_N(y,X)=1$.
\end{proof}

When the number of samples increases, the capacities decrease towards their asymptotic value, as illustrated in  Fig. \ref{Fig:2} (a). We now analyse this asymptotic behavior.

Throughout the remainder of this section, we assume that
$\langle y^2\rangle$ is known analytically and does not need to be estimated from the data. This is true in the important case of estimating the capacities of a complete orthonormal basis such as the one constructed from Legendre polynomials in Section \ref{subsubsec:examp_legendre}.
We also assume throughout that the samples are independently drawn from the underlying distribution $p(u)$.
We then have the following result for the asymptotic behavior of the capacity when the number of samples is large

\begin{proposition} 
\label{Prop:AsymptCN}
Given $N \gg K$ independent samples, the expectation over datasets of the empirical capacity $C_N(y,X)$ for a stationary system to reconstruct a function $y$ can be expanded in a series in $1/N$. Assuming that $\langle y^2\rangle$ is known and does not need to be estimated, the leading terms of this expansion take the form
\begin{equation}
E \bigl[ C_N(y, X) \bigr] = C(y, X) +  \frac{\Delta(y,X)}{N} + \mathcal{O}\left({N^{-3/2}}\right) .
\label{Eq:capacity_N}
\end{equation}
where
\begin{equation}
\label{eq:Delta_N}
 \Delta(y,X) = 
  \frac{  E \left[  \left ( \sum_{i=1}^K  \tilde{X}_i^2 \right)
  \left( P_{\mathcal X}^\perp (y) \right)^2 
  \right] }{   E \left[  y^2 \right]} \geq 0 .
\end{equation}
\end{proposition}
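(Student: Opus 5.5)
Since the empirical capacity is invariant under invertible linear transformations of the readouts (the argument of Eq.~\eqref{Eq:CCequal} applies verbatim to empirical averages), I will work throughout with the population-orthonormalized readouts $\tilde X$ of Eq.~\eqref{Eq:tildeX}. Then the population Gram matrix is $\tilde G=\mathbb I$, and I decompose the target as $y = y^* + y^\perp$ with $y^*=P_{\mathcal X}(y)=\sum_i \tilde R_i \tilde X_i$ and $y^\perp = P^\perp_{\mathcal X}(y)$. Since $y^\perp$ is orthogonal to every $\tilde X_i$, we have $E[\tilde X_i y^\perp]=0$. The empirical quantities are written as fluctuations around their means:
\begin{equation*}
\tilde G_N = \mathbb I + \delta G,\qquad \tilde R_N = \tilde R + \delta R,
\end{equation*}
where $\delta G_{ij} = \frac1N\sum_n(\tilde X_i\tilde X_j - \delta_{ij})$ and $\delta R_i = \frac1N\sum_n \tilde X_i y$. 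Both fluctuations have zero mean and are $\mathcal O(N^{-1/2})$ by the central limit theorem.

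The key simplification is to split $\delta R$ using the decomposition of $y$:
\begin{equation*}
\delta R = \delta G\,\tilde R + \eta,\qquad \eta_i = \frac1N\sum_n \tilde X_i(n)\, y^\perp(n).
\end{equation*}
This works because $\frac1N\sum_n \tilde X_i y^* - \tilde R_i = (\delta G\,\tilde R)_i$. Hence $\tilde R_N = \tilde G_N \tilde R + \eta$. Substituting into $\langle y^2\rangle C_N = \tilde R_N^T\tilde G_N^{-1}\tilde R_N$ gives the exact identity
\begin{equation*}
\langle y^2\rangle C_N = \tilde R^T\tilde G_N\tilde R + 2\tilde R^T\eta + \eta^T\tilde G_N^{-1}\eta .
\end{equation*}
This step is the heart of the argument: it isolates all the bias into the last term.

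Taking the expectation over datasets term by term:
\begin{itemize}
\item $E[\tilde R^T\tilde G_N\tilde R] = \tilde R^T\tilde R = \langle y^2\rangle C(y,X)$, using $E[\tilde G_N]=\mathbb I$.
\item $E[\eta]=0$, since $E[\tilde X_i y^\perp]=0$.
\item For the last term I expand $\tilde G_N^{-1} = \mathbb I - \delta G + \mathcal O(\delta G^2)$. The leading piece is $E[\eta^T\eta]=\sum_i E[\eta_i^2]$. Because samples are i.i.d.\ and each summand of $\eta_i$ has zero mean, cross terms between different samples vanish, leaving $E[\eta_i^2] = \frac1N E[\tilde X_i^2 (y^\perp)^2]$.
\end{itemize}
Summing over $i$ and dividing by $\langle y^2\rangle$ (assumed known, so it introduces no further fluctuation) gives exactly $\Delta(y,X)/N$ with $\Delta$ as in \eqref{eq:Delta_N}. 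Nonnegativity is immediate, since the integrand is a product of squares.

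The main obstacle is controlling the remainder, i.e.\ showing that $E[\eta^T(\tilde G_N^{-1}-\mathbb I)\eta]$ is $\mathcal O(N^{-3/2})$. Heuristically this is clear: $\eta$ is $\mathcal O_p(N^{-1/2})$ and $\delta G$ is $\mathcal O_p(N^{-1/2})$, so the product is $\mathcal O_p(N^{-3/2})$. In fact $E[\eta^T\delta G\,\eta]$ is a third-order moment of sample means and is exactly $\mathcal O(N^{-2})$.

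Making the bound rigorous requires two further ingredients:
\begin{itemize}
\item finite moments of $\tilde X$ and $y$ (e.g.\ fourth or sixth moments), so that the Cauchy--Schwarz and moment bounds for sample means apply;
\item control of $\tilde G_N^{-1}$ on the rare event where $\tilde G_N$ is nearly singular. Here I would use that $C_N\le 1$ (Proposition~\ref{Prop:CapN}) to bound the integrand uniformly, and a matrix concentration inequality to show that the event $\|\delta G\|>1/2$ has probability decaying faster than any power of $N$ when $N\gg K$.
\end{itemize}
On the complementary event the Neumann series converges, and the moment bounds give the stated order.
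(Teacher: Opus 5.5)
Your proof is correct, but it is organized differently from the paper's.

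\textbf{The paper's route.} It substitutes the Neumann series $\tilde G_N^{-1}=\mathbb{I}-\tilde\eta_N+\tilde\eta_N^2-\cdots$ directly into $\tilde R_N^T\tilde G_N^{-1}\tilde R_N$. It then evaluates three $\mathcal{O}(1/N)$ contributions separately: $E[\tilde R_N^T\tilde R_N]$, $E[\tilde R_N^T\tilde\eta_N\tilde R_N]$ and $E[\tilde R_N^T\tilde\eta_N^2\tilde R_N]$. These produce moments such as $E[\tilde X_i\tilde X_j^2\tilde X_k]$. Only afterwards does it insert $y=\alpha\tilde X_1+P^\perp_{\mathcal X}(y)$ and watch the terms collapse into $E[(\sum_i\tilde X_i^2)(P^\perp_{\mathcal X}(y))^2]$.

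\textbf{Your route.} The exact identity $\tilde R_N=\tilde G_N\tilde R+\eta$ performs that collapse before any expansion. In $\tilde R^T\tilde G_N\tilde R+2\tilde R^T\eta+\eta^T\tilde G_N^{-1}\eta$, the first two terms have exact expectations $\langle y^2\rangle C(y,X)$ and $0$. So the entire bias is the single nonnegative quadratic form $\eta^T\tilde G_N^{-1}\eta$, which only needs expanding to zeroth order in $\delta G$.

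\textbf{What each buys.} Your route gives:
\begin{itemize}
\item much less algebra;
\item a remainder whose leading piece is a third-order moment of centred sample means, so the error is in fact $\mathcal{O}(N^{-2})$ under moment assumptions;
\item a non-asymptotic statement the paper does not obtain: $E[C_N(y,X)]-C(y,X)=E[\eta^T\tilde G_N^{-1}\eta]/\langle y^2\rangle\ge 0$ for every $N$. The identity also holds with the pseudo-inverse, since $\eta$ lies in the range of $\tilde G_N$.
\end{itemize}
The paper's route is the more mechanical delta-method computation. It records the bias first as an explicit moment expression in $y$ itself, Eq.~\eqref{Eq:CN-intermediate}. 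However, it leaves the remainder uncontrolled, whereas you at least sketch how to control it.

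\textbf{Minor points in your sketch.}
\begin{itemize}
\item Your definition of $\delta R_i$ should have $\tilde R_i$ subtracted.
\item The statement divides by the exact population $\langle y^2\rangle$. The bound $C_N\le 1$ of Proposition~\ref{Prop:CapN} is proved with the empirical second moment of $y$, so it does not literally apply on the bad event. Use instead that $\eta^T\tilde G_N^{-1}\eta$ is the empirical squared norm of the least-squares projection of the sampled $P^\perp_{\mathcal X}(y)$ onto the sampled readouts. Hence it is at most $\frac1N\sum_n \bigl(P^\perp_{\mathcal X}(y)\bigr)(n)^2$, which has finite moments and is uniformly bounded for Legendre targets.
\item Super-polynomial decay of $P(\|\delta G\|>1/2)$ needs bounded or sub-Gaussian readouts. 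With only finitely many moments you get a polynomial rate, which still suffices if enough moments are assumed.
\end{itemize}
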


Three remarks about this result before giving the proof. (i) Eq. \eqref{eq:Delta_N} shows that due to statistical fluctuations, the finite sample estimator contains information about the component of $y$ orthogonal to the space ${\mathcal X}$ spanned by the readouts. (ii)
The positivity of $\Delta(y,X)$ does not contradict the orthogonality relation
$ E[\tilde X_i P_{\mathcal X}^\perp(y)]=0$, since the variables $\tilde X_i$ and $P_{\mathcal X}^\perp(y)$ are generally not independent.
 (iii) When the capacity is $1$, $\Delta(y,X)$ vanishes:
\begin{equation}
\Delta(y,X) = 0\text{ when }C(y,X)=1 
\end{equation}
since in this case $P_{\mathcal X}^\perp (y) =0$.

\begin{proof}
We define
\begin{eqnarray}
 \tilde{R}_{Ni} &=&  \frac{1}{N} \sum_{n=1}^N \tilde{X}_i(n)  y(n) \nonumber\\
    \tilde{G}_{Nij} &=&  \frac{1}{N} \sum_{n=1}^N \tilde{X}_i (n)  \tilde{X}_j  (n) 
    \end{eqnarray}
    whose fluctuations around their average converge asymptotically to multivariate Gaussian distributions by the central limit theorem.
    We define $\tilde  \eta_{N} $ as the difference between $\tilde{G}_{N}$ and its expectation:
\begin{equation}
    \tilde{\eta_{N ij}} =\tilde{G}_{Nij}-\delta_{i,j} 
\end{equation}
which implies
\begin{align}
 E \bigl[ \tilde{G}_{ij} \bigr] = E \bigl[ \tilde{G}_{Nij} \bigr] &=\delta_{ij} ,\nonumber\\
E \bigl[ \tilde{\eta}_{ij} \bigr] = E \bigl[ \tilde{\eta}_{Nij} \bigr] &=0 .   
\end{align}
The expectation over datasets of the finite sample capacity $C_N(y,X)$ can be written as:
\begin{eqnarray}
 E \bigl[y^2 \bigr] E\bigl[ C_N(y,X)\bigr] &=& E\bigl[\tilde R_N^T \tilde G_N^{-1} \tilde R_N \bigr]\nonumber\\
 &=&  E \bigl[\tilde R_N^T  \tilde R_N \bigr]
 - E\bigl[\tilde R_N^T  \tilde \eta_N \tilde R_N \bigr] +   E \bigl[\tilde R_N^T  \tilde \eta_N^2 \tilde R_N \bigr] - \mathcal{O}(N^{-3/2})
\end{eqnarray}
where we have used $\tilde G_N^{-1} = \left( \mathbb{I} + \tilde \eta_N \right)^{-1}
=\mathbb{I} - \tilde \eta_N + \tilde  \eta_N^2 - \mathcal{O}(N^{-3/2})$ 
and used the fact that $\tilde \eta_N$ tends to zero for large $N$.

Since the samples are independent, the first term evaluates to
\begin{equation}
    E\bigl[ \tilde{R}_N^T\tilde{R}_N \bigr] = \sum_{i=1}^K E \bigl[ \tilde{X}_i  y \bigr]^2 + \frac{1}{N} \sum_{i=1}^K \Bigl(  E \bigl[ \tilde{X}_i^2 y^2 \bigr] - E \bigl[\tilde{X}_i  y \bigr] ^2 \Bigr) .
\end{equation}
The leading contributions of the second and third terms are of order $1/N$. They are given respectively by
\begin{equation}
    E \bigl[ \tilde{R}^T_N \cdot  \tilde \eta_N \cdot \tilde{R}_N \bigr]= \frac{2}{N} \left(  \sum_{i,j=1}^K  E \bigl[\tilde{X}_i^2 \tilde{X_j} y \bigr] E\bigl[  \tilde{X}_j  y \bigr] -  \sum_{i=1}^K E\bigl[  \tilde{X}_i  y \bigr]^2 \right) +\mathcal{O}(1/N^2) 
\end{equation}
and
\begin{equation}
E \bigl[ \tilde{R}^T_N \cdot {\tilde \eta}_N^2 \cdot \tilde{R}_N \bigr] = \frac{1}{N} \left( \sum_{i,j,k=1}^K  E \bigl[ \tilde{X}_i  y \bigr]   E \bigl[ \tilde{X}_i  \tilde{X}_j^2  \tilde{X_k} \bigr]  E \bigl[ \tilde{X}_k  y \bigr] -  \sum_{i=1}^K E \bigl[ \tilde{X}_i  y \bigr]^2 \right) + \mathcal{O}(1/N^2)
\end{equation}

Putting all together, we have
\begin{eqnarray}
    \label{Eq:CN-intermediate}
E\bigl[ y^2 \bigr] E\bigl[  C_N(y,X) \bigr] &=& \sum_{i=1}^K E \bigl[ \tilde{X}_i \cdot y \bigr]^2  \nonumber\\
& & +\frac{1}{N} \left( 
\sum_{i=1}^K  E \bigl[ \tilde{X}_i^2 y^2 \bigr] - 2
   \sum_{i,j=1}^K  E \bigl[\tilde{X}_i^2  \tilde{X_j}  y \bigr]  E\bigl[  \tilde{X}_j  y \bigr] 
   \right.\nonumber\\
   & & \left.   
   +
\sum_{i,j,k=1}^K  E \bigl[ \tilde{X}_i y \bigr]   E \bigl[ \tilde{X}_i  \tilde{X}_j^2  \tilde{X_k} \bigr] E \bigl[ \tilde{X}_k  y \bigr] \right) 
   \nonumber\\
& & + \mathcal{O}(N^{-3/2}) .
\end{eqnarray}

We can decompose $y$ into a component in the subspace spanned by the $X$ and a component in the orthogonal space:
$
y= P_{\mathcal X}(y) + P^\perp_{\mathcal X}(y)
$.
 We can choose  the orthonormal basis $\tilde X$ such  that $P_{\mathcal X}(y) $ is proportional to $\tilde X_1$:
 \begin{equation}
y= \alpha \tilde X_1 + P^\perp_{X}(y)
\end{equation}
with $\alpha \geq 0$.
Inserting this in Eq. \eqref{Eq:CN-intermediate} yields:

\begin{align}
E\bigl[  C_N(y,X)\bigr] &=  C(y,X)+  \notag\\
&\quad \frac{1}{E\bigl[y^2 \bigr] \cdot N} \Bigl(  \alpha^2 E\Bigl[\sum_{i=1}^K \tilde{X}_i^2 \tilde{X}_1^2\Bigr]+  E \Bigl[ \bigl( \sum_{i=1}^K\tilde{X}_i^2 \bigr) \bigl(P^\perp_{X}(y) \bigr)^2 \Bigr]+ 2 \alpha E\Bigl[ \sum_{i=1}^K \tilde{X}_i^2 \tilde{X}_1 P^\perp_{X}(y) \Bigr] \notag\\
 &\quad -2 \alpha   E \Bigl[\sum_{i=1}^K\tilde{X}_i^2  \tilde{X_1}  y \Bigr]   +  \alpha^2 \sum_{j=1}^K   E \Bigl[ \tilde{X}_j^2  \tilde{X}_1^2 \Bigr] \Bigr) + \mathcal{O}(N^{-3/2}) \\
 &= C(y,X)+ \frac{1}{N} \left( \frac{  E \left[  \left ( \sum_{i=1}^K  \tilde{X}_i^2 \right)
  \left( P_{\mathcal X}^\perp (y) \right)^2 
  \right] }{   E \left[  y^2 \right]}\right)+\mathcal{O}(N^{-3/2})
\end{align}
\end{proof}

Let us now turn to the case where the capacity vanishes, $ C(y, X) =0$. This implies $P_{\mathcal X}(y)=0$ and $P^\perp_{\mathcal X}(y)=y$ which implies the following simpler expression and bound

\begin{proposition} 
\label{Prop:zerothresh}
Given $N \gg K$ samples, for a function $y$ whose  capacity vanishes, $C(y,X)=0$, and assuming that $\langle y^2\rangle$ is known,
we have
\begin{equation}
   C_N(y,X)= \frac{\Delta(y,X)}{N}  +\mathcal{O}(N^{-3/2})
\end{equation}
with 
\begin{equation}
\label{Eq:delta_zero}
\Delta(y,X)= \frac{ E \left[ \left( \sum_{i=1}^K  \tilde{X}_i^2 \right ) y^2 \right] }{ E \bigl[  y^2 \bigr]} 
\leq 
\frac{\sqrt{ E\Bigl[\Bigl(\sum_{i=1}^K  \tilde{X}_i^2\Bigr)^2 \Bigr] \cdot E \bigl[y^4 \bigr]}}{E \bigl[  y^2 \bigr]} .
\end{equation}

\end{proposition}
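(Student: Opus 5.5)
The plan is to specialise Proposition~\ref{Prop:AsymptCN} to $C(y,X)=0$ and then apply Cauchy--Schwarz.

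\textbf{Specialisation.} By Proposition~\ref{Prop:AsymptCN}, the expectation over datasets satisfies
\[
E[C_N(y,X)] = C(y,X) + \frac{\Delta(y,X)}{N} + \mathcal{O}(N^{-3/2}).
\]
Put $C(y,X)=0$. By the projection characterisation $C(y,X)=\|P_{\mathcal X}(y)\|^2/\|y\|^2$, this gives $P_{\mathcal X}(y)=0$ and hence $P^\perp_{\mathcal X}(y)=y$. Substituting into Eq.~\eqref{eq:Delta_N} yields at once
\[
\Delta(y,X)=\frac{E\bigl[\bigl(\sum_{i=1}^K \tilde X_i^2\bigr)y^2\bigr]}{E[y^2]}.
\]
The statement writes $C_N$ rather than $E[C_N]$. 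I would read it, as in Proposition~\ref{Prop:AsymptCN}, as a statement about the expectation over datasets. Alternatively one can note that fluctuations of $C_N$ around its mean are of the same order, $1/N$, so the identity holds in mean.

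\textbf{Consistency check.} As a check, one can rederive the expansion directly in this case. Setting $\alpha=0$ in the decomposition $y=\alpha\tilde X_1+P^\perp_{\mathcal X}(y)$ used in the proof of Proposition~\ref{Prop:AsymptCN}, Eq.~\eqref{Eq:CN-intermediate} simplifies to a single term. Since $E[\tilde X_i y]=0$ for all $i$, the zeroth-order term and both cross terms involving $E[\tilde X_j y]$ vanish. Only $\frac{1}{N}\sum_i E[\tilde X_i^2 y^2]$ survives, which matches the formula above. This shows that no cancellation subtleties arise when $\alpha=0$.

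\textbf{Bound.} For the inequality, apply Cauchy--Schwarz in $\mathcal H$ (or in $\mathcal H\otimes\mathcal H_\epsilon$ if noise is present) to the two functions $f=\sum_i\tilde X_i^2$ and $g=y^2$:
\[
E[fg]\le\sqrt{E[f^2]\,E[g^2]}.
\]
This gives exactly the stated right-hand side. For the bound to be meaningful, one needs $f$ and $g$ to be square integrable, i.e.\ finite fourth moments of the readouts and of $y$. I would state this as an implicit assumption, already needed for the CLT-based expansion.

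\textbf{Main obstacle.} The only delicate point is justifying the $\mathcal{O}(N^{-3/2})$ remainder. This is inherited from Proposition~\ref{Prop:AsymptCN} and not re-proved here. It requires the moment assumptions just mentioned, and control of the event where $\tilde G_N$ is far from the identity. Since $0\le C_N\le 1$ by Proposition~\ref{Prop:CapN}, that event contributes negligibly.
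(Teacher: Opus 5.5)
Your proposal is correct and follows the paper's proof: you set $P^\perp_{\mathcal X}(y)=y$ in Eq.~\eqref{eq:Delta_N} to obtain the expression for $\Delta(y,X)$, then apply Cauchy--Schwarz to $\sum_i\tilde X_i^2$ and $y^2$. Reading $C_N$ as the expectation over datasets is the right interpretation, since pathwise fluctuations of $C_N$ are themselves of order $1/N$; your consistency check via Eq.~\eqref{Eq:CN-intermediate} with $E[\tilde X_i y]=0$ is a correct extra that the paper does not include.
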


\begin{proof}
In the case when capacity vanishes, $P_{\mathcal X}^\perp (y)=y$. Therefore, Eq. \eqref{eq:Delta_N} gives the first equality in Eq. \eqref{Eq:delta_zero}.
Applying the Cauchy-Schwarz inequality gives the inequality in Eq. \eqref{Eq:delta_zero}.
\end{proof}

In the case where $y=\mathcal{P}_l$ is an orthonormal  Legendre polynomial of degree $l$, we can explicitly compute the $E \Bigl[y^4 \Bigr]$ term which is given by\cite{adams1878iii}:
\begin{align}
\label{eq:exp_legendre}
 E \Bigl[\mathcal{P}_l^4 \Bigr]= \frac{1}{2}\int_{-1}^{1} \mathcal{P}_l(x)^4\,dx=(2l+1)^2\sum_{L=0}^{l}(4L+1) \begin{pmatrix} l & l & 2L\\0 & 0 & 0\end{pmatrix}^{4}
\end{align}
where $\begin{pmatrix} l & l & 2L\\0 & 0 & 0\end{pmatrix}$ is a Wigner-3j symbol\cite{edmonds1996angular}.

For a product Legendre basis with multi-dimensional inputs 
$u = (u_1, \ldots, u_q)$, the basis functions take the form 
$y_{l_1 l_2 \cdots l_q}(u) = P_{l_1}(u_1) P_{l_2}(u_2) \cdots P_{l_q}(u_q)$.
Since the input components are independent, we have
\begin{equation}
    E[y_l^4]=E\Bigl[y_{l_1 l_2 \cdots l_q}^4 \Bigr] 
    = \prod_{k=1}^{q} E\!\left[P_{l_k}^4\right],
\end{equation}
where each factor $E[P_{l_k}^4]$ is given by Eq. \eqref{eq:exp_legendre}.

\section{Data-efficient estimation of capacities}

We use the results of Section \ref{Sec:FiniteN} to propose methods that enable accurate capacity estimation from limited experimental data. These include algorithms for asymptotic fitting to correct for finite-sample positive bias, removal of false-positive capacities, and low-discrepancy quasi-random sampling to speed up convergence. We validate these methods on a synthetic dataset.

The removal of false-positive capacities is essential if one wishes to obtain a good estimate for the total capacity. This issue was addressed previously in  \cite{dambre2012information,JMLR:v25:23-0568,SAITO2026132128,infocap_timevar}, but  in ways that were either somewhat adhoc, or in the case of \cite{SAITO2026132128} not data efficient. Our approaches to set small capacities to zero are based on the data itself, and are data efficient.

\subsection{Fitting algorithms}
When estimating capacities, we are confronted with the fact that statistical fluctuations will introduce systematic biases to the capacities. Indeed, a capacity $C_N(y,X)$ evaluated on $N$ samples is always positive, see Prop. \ref{Prop:CapN}, and has a systematic positive error of order $1/N$, see Prop \ref{Prop:AsymptCN}. 

 The situation of practical interest is when $K < L < N$, where $K$ is the number of readouts, 
 $L$ the number of orthonormal functions $y_l$, $l=1,\ldots,L$, and   $N$ the number of samples.  We denote the estimated capacities, after the fitting procedure described below, by $\hat C(y_l,X)$. We would like the following conditions to hold:
 \begin{align}
    &0 \leq \hat{C}(y_l, X) \leq 1 ,\label{eq:cond1} \\
    &\hat{C}(y_l, X) \approx C(y_l, X) ,\label{eq:cond2} \\
    &\sum_{l=1}^L \hat{C}(y_l, X) \approx \sum_{l=1}^L C(y_l, X) .\label{eq:sumtildeCN}
\end{align}
The first condition expresses the fact that the estimated capacities $\hat{C}$ should always be positive and bounded by $1$; the second condition that they should be close to the real capacities $C$; and the third condition that the sum of the estimated capacities should be close to the sum of the real capacities. Because $L>K$, condition \eqref{eq:sumtildeCN} does not follow automatically from condition \eqref{eq:cond2}.

We propose two algorithms to address these issues. Both  algorithms exploit the asymptotic behavior $
C_N(y,X)=C(y,X)+\frac{\Delta(y,X)}{N}+O(N^{-3/2})$ proven in Prop. \ref{Prop:AsymptCN}. Using Richardson extrapolation, i.e. computing
$
2C_N(y,X)-C_{N/2}(y,X)
= C(y,X)+O(N^{-3/2})$,
cancels the leading finite-sample bias and provides a more precise estimate of the capacity $C(y,X)$. Because the capacities are asymptotically decreasing with $N$, this
preserves the condition that the capacities should be less than $1$. However, after Richardson extrapolation, we no longer have the guarantee that the capacities are positive. The two algorithms use different methods to address the latter point.

The first procedure is described in Algorithm \ref{alg:capacity_estimation}.
In this algorithm, we first estimate the capacities $C_N(y_l,X)$ and use the basis-dependent upper bound derived from Prop. \ref{Prop:zerothresh} as a threshold, and set all capacities lower than the threshold to zero. Then, for capacities which have not been set to zero, we use Richardson extrapolation\cite{richardson} to estimate the
asymptotic value of the capacity. After this step, a few capacities may still be negative, and these are set to zero. 

\begin{algorithm}[H]
\caption{Estimating capacities: basis function-dependent thresholding followed by Richardson extrapolation.}
\label{alg:capacity_estimation}
\begin{algorithmic}[1]
\Require $N$ samples from the $q$-dimensional inputs $u$, drawn independently from the same distribution $p(u)$, the responses of the $K$ readouts $X_i$ to each input, a set of $L$ orthonormal functions $\{y_l(u), l=1,\ldots,L\}$.
\Ensure Bias-corrected capacity estimates $\{\hat{C}_N(y_l, X)\}_{l=1,\ldots,L}$

\State 
From the data set, compute 
\begin{equation}
    S_N = E_N \left[ \left( \sum_{i=1}^K  \tilde{X}_i^2 \right )^2\right]
    \label{Eq:SN}
\end{equation}

\For{each function $y_l$}

    \State Estimate $C_N(y_l, X)$ using the full dataset of $N$ samples.

\State Compute analytically $E\left[y_l^4\right]$ (given by Eq. \eqref{eq:exp_legendre} for Legendre polynomials)

  \If{${C}_N^{}(y_l, X) \leq \frac{1}{N} \sqrt{S_N E\left[y_l^4\right]}$}
    \label{Alg:T1}
        \begin{equation}
            \hat C_N(y_l,X) \gets  0.
   \label{Eq:Cthreshold1}
        \end{equation}
      \Else ~ Apply Richardson extrapolation:

     \State Partition the dataset into two equal subsets of $N/2$ samples each and calculate capacities on each subset separately to get $C^{(1)}_{N/2}$ and $C^{(2)}_{N/2}$
    \State Estimate $\overline{C_{N/2}}$ by averaging the two resulting estimates
    \begin{equation}
        \overline{C_{N/2}}=\frac{\Bigl(C^{(1)}_{N/2}+C^{(2)}_{N/2} \Bigr)}{2}
    \end{equation}
    \State 
Compute
    \begin{equation}
        \hat{C}_N(y_l, X) = 2 \cdot C_N(y_l, X) - \overline{C_{N/2}} (y_l, X)
        \label{Eq:tildeCN0-B}
    \end{equation}

 \If{$\hat C_N(y_l,X) <0$}
        \begin{equation}
            \hat C_N(y_l,X) \gets  0.
            \end{equation}
\EndIf 

       \EndIf 
     \EndFor  

\State \Return $\{\hat{C}_N(y_l, X)\}$
\end{algorithmic}
\end{algorithm}

We also propose an alternative, simpler Algorithm \ref{alg:capacity_estimation-2}. In this second algorithm, we first perform Richardson extrapolation, and then use Step \ref{Alg:S7} to
ensure that the estimated capacities are always positive.
Capacities which are very small or zero have statistical fluctuations whose magnitude we evaluate with Eq. \eqref{eq:Delta-B}. Setting all capacities less than $-B$ to zero in Eq. \eqref{Eq:Cthreshold2} ensures that the very small capacities do not contribute significantly to the sum $\sum_{l=1}^L \hat C(y_l,X) $, ensuring that Eq. \eqref{eq:sumtildeCN} is satisfied. We note that we could use a more refined method to estimate the threshold 
used in Step \ref{Alg:S7}, for instance, by looking in more detail at the distribution of the negative $\hat C_N(y_l,X)$.

 In practice, for our case, we have found that both algorithms work almost equally well and produce very similar capacity profiles. The benefits of Algorithm \ref{alg:capacity_estimation-2} is that it is a more straightforward method which does not require computing basis-dependent thresholds, while guaranteeing non-negative capacities. On the other hand, we have observed empirically that Algorithm \ref{alg:capacity_estimation-2} gives stronger fluctuations for the sum of the capacities $\sum_{l=1}^L \hat{C}(y_l, X)$.
 For this reason, we use Algorithm \ref{alg:capacity_estimation} in the remainder of this work. 

 Throughout, because the basis functions $y_l$ are analytically normalized, we use the exact value $
E[y_l^2]=1$ when computing the raw capacities $C_N(y_l, X)$ and the analytical thresholds rather 
 than estimating $E[y_l^2]$ empirically.

\begin{algorithm}[H]
\caption{Estimating capacities: Richardson extrapolation followed by thresholding}
\label{alg:capacity_estimation-2}
\begin{algorithmic}[1]
\Require $N$ samples from the $q$-dimensional inputs $u$, drawn independently from the same distribution $p(u)$, the responses of the $K$ readouts $X_i$ to each input, a set of $L$ orthonormal functions $\{y_l(u), l=1,\ldots,L\}$.
\Ensure Bias-corrected capacity estimates $\{\hat{C}_N(y_l, X)\}_{l=1,\ldots,L}$

\For{each function $y_l$}
    \State Estimate $C_N(y_l, X)$ using the full dataset of $N$ samples
    \State Partition the dataset into two equal subsets of $N/2$ samples each and calculate capacities on each subset separately to get $C^{(1)}_{N/2}$ and $C^{(2)}_{N/2}$
    \State Estimate $\overline{C_{N/2}}$ by averaging the two resulting estimates
    \begin{equation}
       \overline{ C_{N/2}}=\frac{\Bigl(C^{(1)}_{N/2}+C^{(2)}_{N/2} \Bigr)}{2}
    \end{equation}
    \State \label{Alg:S5}
    Apply Richardson extrapolation using the asymptotic behavior
           of Proposition~\ref{Prop:AsymptCN}:
    \begin{equation}
        \hat{C}_N^{}(y_l, X) = 2 \cdot C_N(y_l, X) - \overline{C_{N/2}}(y_l, X)
        \label{Eq:tildeCN0}
    \end{equation}
\EndFor

\State Compute the minimum over all extrapolated estimates:
\label{Alg:S7}
\begin{equation}
    B = \min_l\, \hat{C}_N^{}(y_l, X)
    \label{eq:Delta-B}
\end{equation}
\For{each function $y_l$}
    \If{$\hat{C}_N^{}(y_l, X) < -B$}
    
        \begin{equation}
            \hat C_N(y_l,X) \gets  0.
   \label{Eq:Cthreshold2}
        \end{equation}
    \EndIf
\EndFor

\State \Return $\{\hat{C}_N(y_l, X)\}$
\end{algorithmic}
\end{algorithm}

\subsection{Low-discrepancy quasi-random sampling}

To further improve data efficiency, we propose the use of quasi-random sampling to generate the input sequence $u$. Sobol sampling~\cite{SOBOL196786} is a method for generating low-discrepancy, quasi-random, uniformly distributed samples, which is widely used to improve convergence rates in Monte Carlo simulations. Sobol sampling is especially useful in the case of higher-dimensional inputs, since pseudo-random sampling can often form clusters and fail to properly represent the whole space with a limited number of points.

For Sobol sampling, it is important to use ordered subsets (i.e., the first and second halves) when dividing the samples for Richardson extrapolation in Algorithms \ref{alg:capacity_estimation} and  \ref{alg:capacity_estimation-2}, as selecting an unordered subset negates the low-discrepancy advantages of Sobol sampling.

\subsection{Validation}
\label{subsec-val}

To validate Algorithm  \ref{alg:capacity_estimation}, we tested it on a synthetic dataset for which we know exactly the capacities for each basis function.
The data set is 
generated as follows.
We used 5-dimensional inputs, generated 8192 samples, and considered all Legendre product basis functions of total degree less than or equal to 8. This corresponds to $1287$ basis functions. We then randomly selected $200$ of these functions, and finally generated $71$ synthetic readouts as random linear combinations of these $200$ selected basis states. The number of readouts $K=71$, input dimensions $q=5$, number of samples $N=8192$, and the maximum degree were chosen to match the analysis of the experimental system presented later in this paper.
We repeated this procedure 1000 times, each time choosing a different set of 200 basis states and synthetic readouts, and computed the mean of the error between the actual capacity and the fitted capacity   $\hat C_N(y_l,X) - C(y_l,X)$  for each basis $y_l$.

To assess the effectiveness of Sobol sampling in estimating capacities, we repeated the validation procedure using both a pseudo-random uniform distribution and a Sobol distribution with the same number of samples.

The results, reported in Fig \ref{Fig:2} (b) and (c), indicate that the validation method performs well in thresholding zero capacities and accurately capturing the asymptotic capacity. Furthermore, Sobol sampling is observed to have smaller errors than pseudo-random uniform sampling. For this reason, Sobol sampling is used throughout the paper for capacity estimation.

\begin{figure}[H]
    \centering
    \includegraphics[width=1\textwidth]{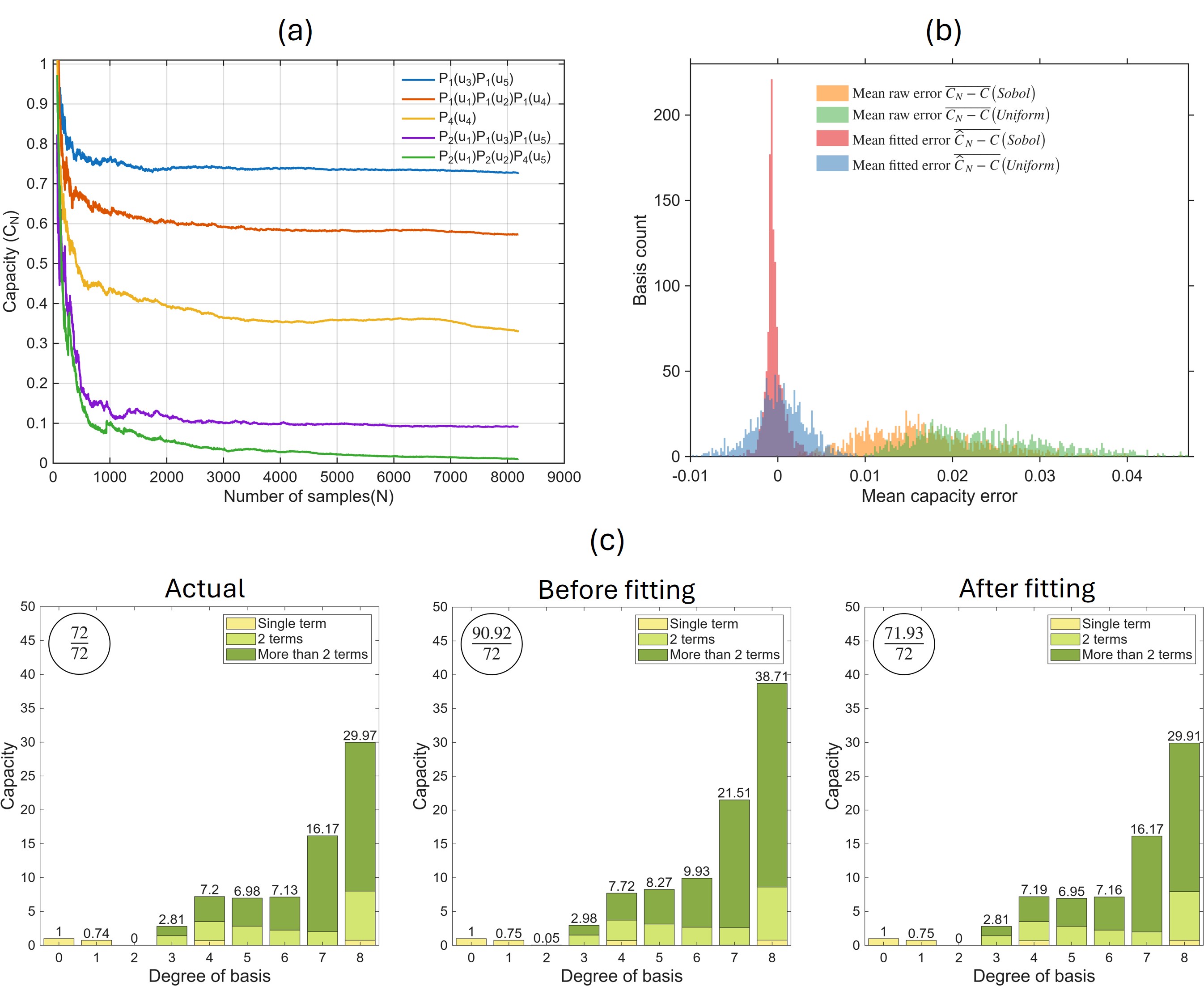}
    \caption{
    (a) Raw capacities $C_N(y_l, X)$ as a function of the number of samples $N$ for selected basis functions. The curves all start at $1$ for $N=K=72$, and then decrease for larger $N$, illustrating the asymptotic behaviour proven in Prop. \ref{Prop:AsymptCN}. The data for this panel is obtained using the experimental system described in Section \ref{Sec:Exp-Sys} using average power=-1.9 dBm and fiber length =40m.
    (b,c) Results for the synthetic dataset described in Section \ref{subsec-val}. 
  (b)  Histogram of the errors made when estimating capacities. Horizontal axis is the error made when estimating capacities with bins of width $2 \times 10^{-4}$. Vertical axis is the number of basis functions (products of Legendre polynomials) with error falling within one bin.
  Yellow, Green: Error $ C_N(y_l,X) - C(y,X)$ between the true capacity and the raw capacity with Sobol and pseudo-random uniform distributions, respectively. Red, Blue: Error  $ \hat C_N(y_l,X) - C(y_l,X)$ between true capacity $C(y,X)$ and the estimated capacity $\hat C_N(y_l,X)$  obtained after applying Algorithm \ref{alg:capacity_estimation} with Sobol and pseudo random uniform distributions, respectively. We see that  Algorithm \ref{alg:capacity_estimation} removes a systematic positive bias and reduces the dispersion of errors, and that Sobol sampling reduces the variance of the errors.
  (c) Capacity bar-plots for the true capacities (Actual, left panel), the raw values $C_N(y_l,X)$ (Before Fitting, middle panel), and the estimates  $\tilde C_N(y_l,X)$ obtained after applying Algorithm \ref{alg:capacity_estimation} (After Fitting, right panel). Applying Algorithm \ref{alg:capacity_estimation} makes the estimated capacities, and in particular the total capacity (inset inside the circle), much closer to the ground truth. Sobol sampling is used in panels (a) and (c).}
  \label{Fig:2}
\end{figure}

\section{Experimental setup}
\label{Sec:Exp-Sys}

\subsection{Physical system}\label{subsec:PS}

\begin{figure}[H] 
    \centering
    \includegraphics[width=1.0\textwidth]{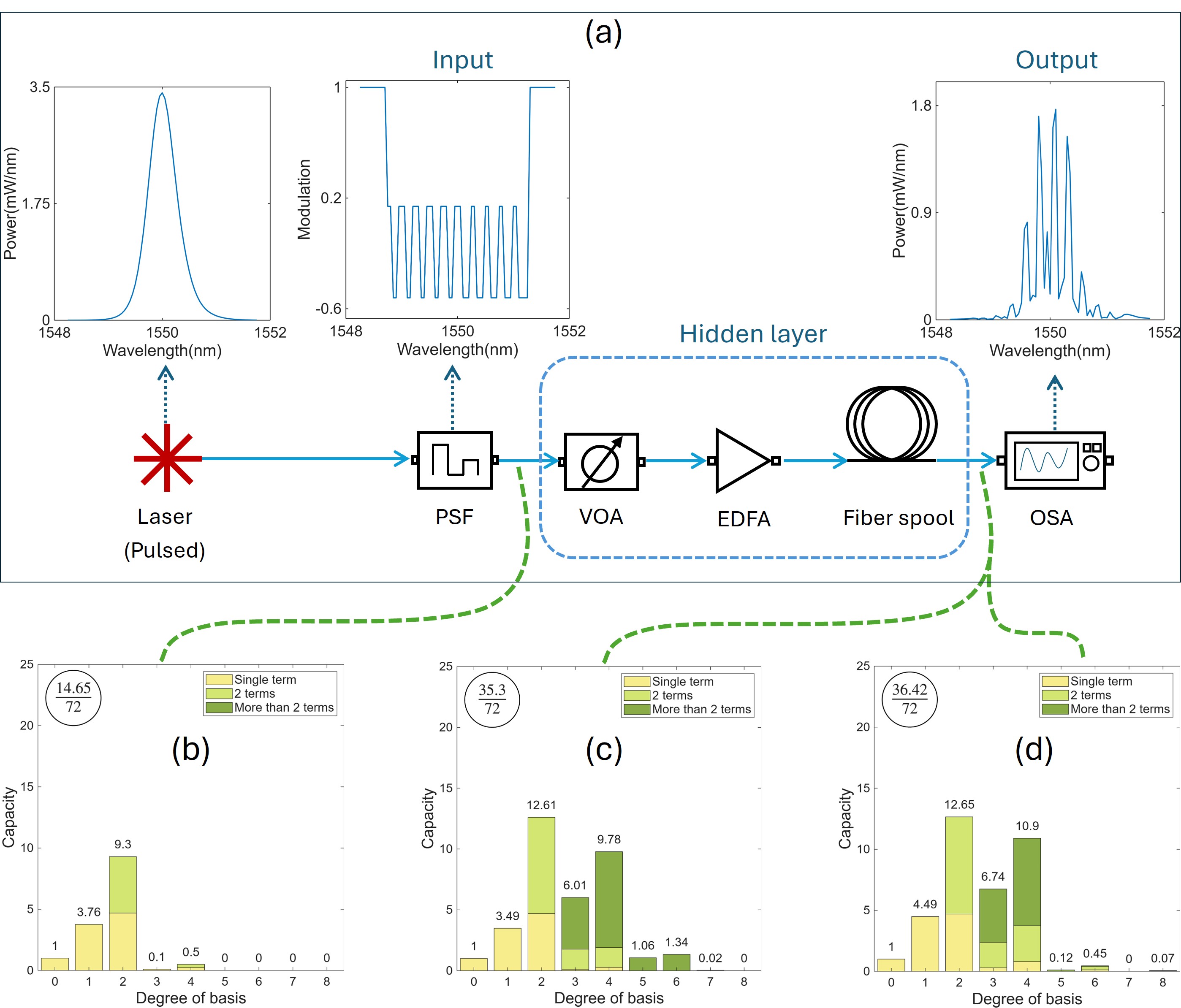}
    \caption{(a) Schematic of the Experiment. Programmable Spectral Filter (PSF), Variable Optical Attenuator (VOA), Erbium Doped Amplifier (EDFA), Optical Spectral Analyzer (OSA).  The insets depict the spectrum of the laser, the transmission of the PSF when 2 inputs are encoded, and the corresponding output spectrum. Information Processing Capacities depicted as bar plots: (b) Simulation of IPC before propagation; (c) Simulation of IPC  after propagation through the fiber; (d) Experimental measurements of IPC  after propagation. All results are for average output power of $6.6$ dBm and fiber length of $5$m.}
    \label{Fig:exp_system}
\end{figure}

The experimental system, depicted in Fig. \ref{Fig:exp_system}, follows previous works  \cite{nonlin_schro,supercontinuum}.
The source is a laser (Pritel FFL) that produces pulses with a full width at half maximum (FWHM) of 4.2 ps, peak power 61 W, spectral width of 0.6 nm (FWHM) centered around 1550 nm, and repetition rate 10 MHz.

The input encoding stage is implemented using a Programmable Spectral Filter (PSF, Finisar Waveshaper 4000S). 
The PSF is programmed to encode the inputs in a spectral span of 2.5nm divided into 20 discrete bins. In the case of 2 inputs, the spectral bins alternately encode one and then the other input. In the case of 5 inputs, the 5 inputs are encoded in sequential spectral bins, and the sequence is repeated 4 times. The inputs belong to the interval $u\in[-1,+1]$. We instruct the PSF to attenuate the power of each spectral bin by a factor of $|u|$ 
such that $|u|=0$ corresponds to the maximum attenuation and $|u|=1$ corresponds to the minimum attenuation. We also apply a phase of $0$ or $\pi$ according to the sign of $u$. This corresponds to multiplying the spectral field by $\mathrm{sign}(u) \sqrt{|u|}$

The pulse then passes through a Variable Optical Attenuator (VOA, Thorlabs DV1550AA), and then through an Erbium Doped Fiber Amplifier (EDFA, Pritel PMFA 15). The average output power can be adjusted in the range -9.1 dBm to 7.1 dBm. The amplified pulse then passes through a fiber spool. Finally, the spectrum is measured using an Optical Spectrum Analyzer (Yokogawa AQ6370D) with a resolution of 0.05nm.
A spectral span of 3.5nm is used, resulting in  71 spectral bins, which are used as outputs.

The experiment involves two controllable parameters: (1) the laser power, adjusted via the VOA while the EDFA pump current is held fixed; and (2) the fiber length, set by switching between 5m and 40m fiber spools.

\subsection{Model of the physical system}

A simple model of the experiment is as follows.
The pulse envelope (ignoring the carrier wave) at the output of the laser is taken to be a sech function:
\begin{align}
    A(t)=\sqrt{P_{peak}}\operatorname{sech}(t/\tau) ,
\end{align}
where $P_{peak}$ is the peak power at the output of the laser and $\tau$ is related to the pulse width (FWHM) by $\tau_{\text{FWHM}}=1.763\tau$.
The corresponding envelope in the spectral domain is:
\begin{equation}
    \tilde{A}(\omega)= \pi \tau  \sqrt{P_{peak}} \operatorname{sech} \left( \frac{\pi \tau \omega}{2} \right)\ .
\end{equation}
The input is encoded using a function $u(\omega)$ as described in Section \ref{subsec:PS}. To account for the finite instrument response of the PSF, the encoding mask is first convolved with the instrument filter \(h_\mathrm{IF}(\omega)\) of the PSF to yield the effective mask on the spectrum as: 
\begin{equation}
    v(\omega)
    =
    \Bigl(
    \mathrm{sign}(u(\omega))
    \sqrt{|u(\omega)|}
    \Bigr)
    * h_{\mathrm{IF}}(\omega) .
    \label{eq:exp_IF}
\end{equation}

The instrument filter \(h_\mathrm{IF}(\omega)\) of the PSF induces some mixing between inputs.

The amplitude in the spectral domain, after encoding the input and passing through the EDFA, is thus
\begin{equation}
    \tilde{A}(\omega)= \pi \tau   v(\omega) \sqrt{ P_{peak}} \operatorname{sech} \left( \frac{\pi \tau \omega}{2} \right) ,
\end{equation}
with $P_{peak}$ the peak power after amplification.

The pulse envelope $A(t,z)$ then evolves while propagating through the fiber according to the Non Linear Schrödinger Equation:
\begin{equation}
    \frac{\partial A}{\partial z} = -\frac{ \alpha}{2} A +i \frac{\beta_{2}}{2} \frac{\partial^{2} A}{\partial t^{2}}+i\gamma|A|^{2} A 
\end{equation}
where $\beta_2= -23~\mathrm{ps^2/km} $ is the Group velocity dispersion parameter and $\gamma = 1.2~\mathrm{W^{-1}km^{-1}} $ is the nonlinear parameter. The pulse envelope at the end of the fiber is $A(t,L)$.

At the end of the fiber, the power spectrum $\vert \tilde A(\omega, L) \vert^2$ is measured using the OSA. Note that this measures the square of the optical field, thereby automatically generating a quadratic dependence in the field amplitude.

 The strength of the optical nonlinearity can be quantified by the nonlinear phase shift defined as $\phi_{NL}=\gamma P_{peak}L$, where L is the propagation length. Figure ~\ref{Fig:exp_system}(b) depicts the capacities of the system in the absence of propagation. The additional nonzero capacities appearing in  Figure ~\ref{Fig:exp_system}(c) and Figure~\ref{Fig:exp_system}(d) can be attributed to propagation through the fiber.

\subsection{Computational methods}
The uniformly distributed random inputs necessary for capacity estimation are generated via Sobol (quasi-random) sampling over $[-1,1]$. This improves space-filling uniformity compared to standard uniform random draws and accelerates convergence. The number of samples drawn is 4096 and 8192 for 2-dimensional and 5-dimensional inputs, respectively. 

We use the Legendre product basis from Section~\ref{subsubsec:examp_legendre} for all calculations. We retain all basis terms up to a total degree of 14 and 8 for 2-dimensional and 5-dimensional inputs, respectively. This corresponds to 120 basis terms for 2-dimensional, and 1287 terms for 5-dimensional inputs.

In Figure \ref{Fig:exp_system} (b) and (c), the experimental system is simulated using the split-step Fourier method. The instrument filter of the PSF is simulated as a flattop shape. In order to simulate phase instabilities in the experimental system, phase noise $\delta\phi \sim \mathcal{N}(0, \sigma_\phi^2)$ with 
$\sigma_\phi = 0.15 \times 10^{-2} \times 2\pi$ rad is added independently 
to each spectral bin of the encoding mask before convolution with the PSF 
instrument filter (Eq.~\eqref{eq:exp_IF}). The length of the fiber inside the EDFA is 2.4m. The effective length of the EDFA spans between 0.4-0.88m for the powers used in this work. Due to this comparatively small value, EDFA is ignored in simulations and when calculating nonlinear phases.

 The output spectrum provided by the OSA is oversampled. We linearly interpolate the raw spectrum over a width of 0.05nm, equal to the spectral resolution of the OSA.  Given the total spectral width of 3.5nm, this gives us 71 spectral bins as readouts.

Before computing the capacities for a given set of readouts, we append an additional constant readout (all ones) to allow the system to represent target functions that differ only by an additive bias. This increases the maximum total capacity by at most 1.

For the machine learning analysis, we use two benchmark classification tasks. The first one is the Two Spirals task \cite{LangWitbrock1988}. It's a binary classification task in which the goal is to predict which of two interleaving spirals a point belongs to based on its \((x,y)\) coordinates. We use 2000 data points for this task. The \((x,y)\) coordinates are generated in the range $[-1,1]$, which can be directly encoded to the spectrum. The second task uses a simplified MNIST handwritten digit dataset\cite{mnist}. Only a random subset of 5000 samples from the 60000 available training samples is used.  To properly compare with the 5-dimensional information-capacity setting, Principal Component Analysis (PCA)\cite{hotelling1933pca} is used to reduce the input features from 784 to 5. The values are then linearly normalized to be in the range $[-1,1]$. The accuracy of both tasks is estimated using 5-fold cross-validation.

\section{Results}

\subsection{Capacity and experimental parameters}

We first compare the system's total capacity across different combinations of input power and fiber length (the two adjustable parameters). To probe the effect of input dimensionality, we evaluate two cases: \(2\)-dimensional and \(5\)-dimensional inputs.

\begin{figure}[H]
    \centering
    \includegraphics[width=1.0\textwidth]{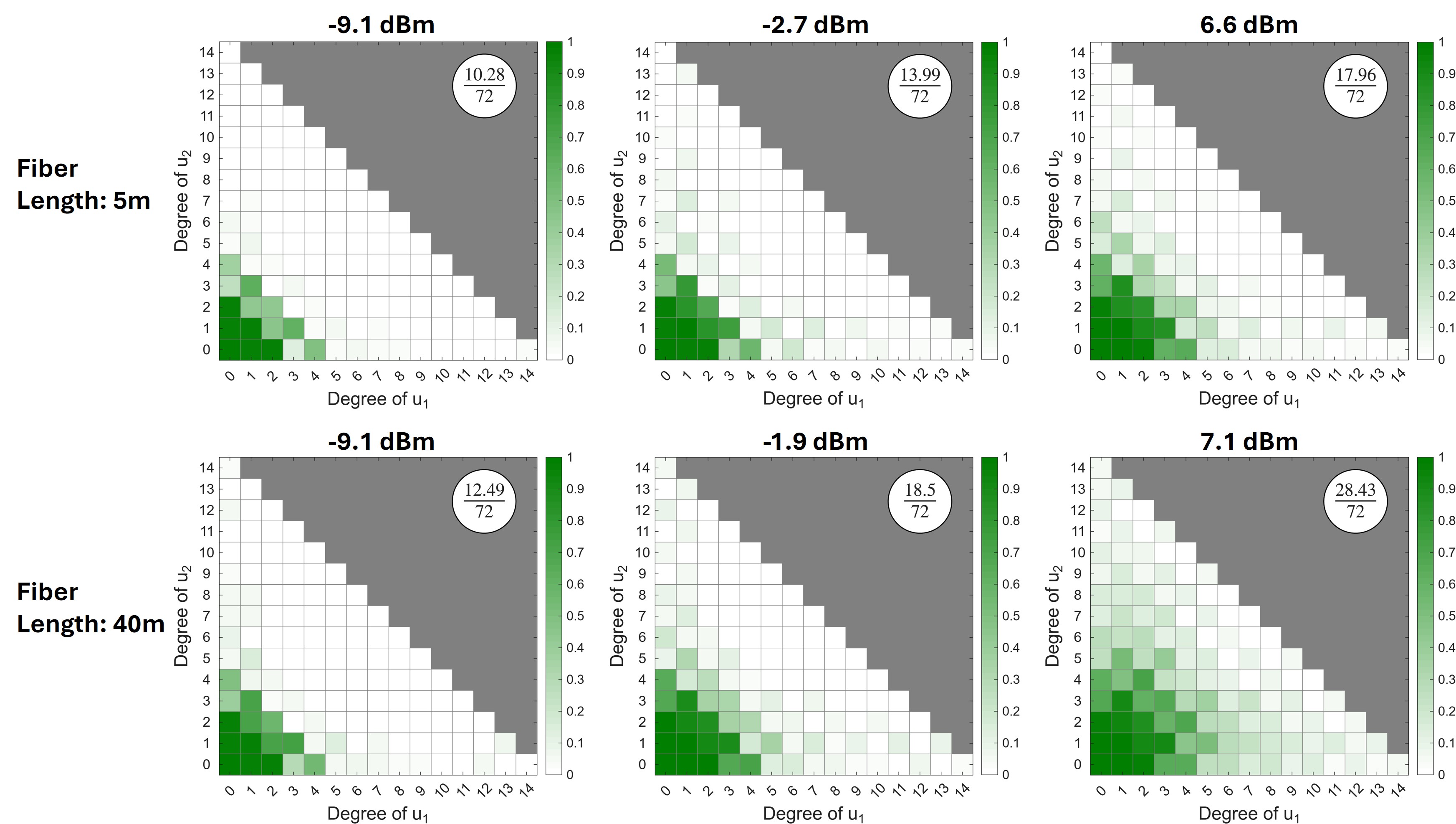}
    \caption{Capacity matrices for 2-dimensional inputs compared at different laser powers (the measured quantity is the average output power) and fiber lengths. The fraction inside the circle corresponds to $ \frac{\textit{Total Capacity}}{\textit{Maximum Capacity}}$.
    }
    \label{Fig:2dResults}
\end{figure}

Figure \ref{Fig:2dResults} shows the results for the
2-dimensional inputs. At low powers, the capacity matrices contain only low-degree terms. In particular, the second-degree terms are always maximum even at small powers. These terms are due to
 the quadratic detection nonlinearity and the convolution with the instrument filter of the PSF. 
They are present even without propagating through the fiber, see Fig. \ref{Fig:exp_system}(b). 
  As the power increases, higher-order terms emerge and grow in magnitude, due to the fiber nonlinearity.

\begin{figure}[H]
    \centering
    \includegraphics[width=1.0\textwidth]{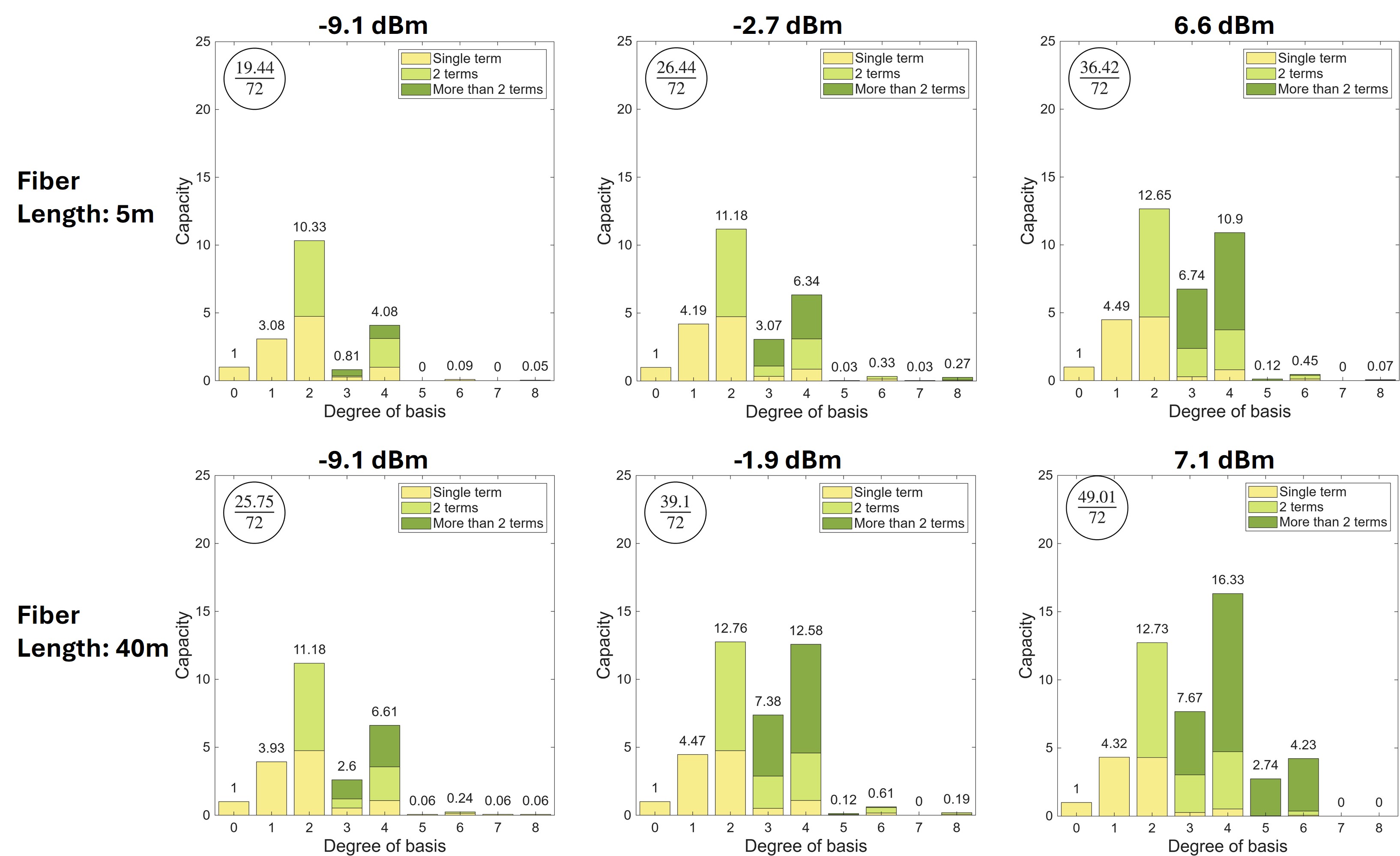}
    \caption{Capacity bar-plots for 5-dimensional inputs compared at different laser powers (the measured quantity is the average output power)  and fiber lengths. The fraction inside the circle corresponds to $ \frac{\textit{Total Capacity}}{\textit{Maximum Capacity}}$}
     \label{Fig:5dResults}
\end{figure}

Figure \ref{Fig:5dResults} shows the results for the
5-dimensional inputs. At low power, capacity is again concentrated in degree 2 terms, while at higher power and longer fiber length, the distribution broadens toward higher total degrees, reflecting a richer set of nonlinear interactions among input components.

In our experiment, the dispersion has only a small effect, and the pulse distortion is dominated by the Kerr nonlinearity whose strength can be measured by the nonlinear phase $\Phi_{NL}$. This is confirmed by
 Fig. \ref{Fig:TotalCapPhiNL}  in which we plot the total capacity as a function of $\phi_{NL}$. We see that the experimental points for 2D and 5D inputs follow different curves, but that the curves for 5m and 40m fiber lengths overlap, suggesting a universal behaviour that depends only on $\Phi_{NL}$.
Across both input dimensionalities, increasing either power or fiber length increases the effective nonlinear phase and therefore shifts capacity from predominantly low-order components toward a more diverse, higher-order profile.

\begin{figure}[H]
    \centering
    \includegraphics[width=0.65\textwidth]{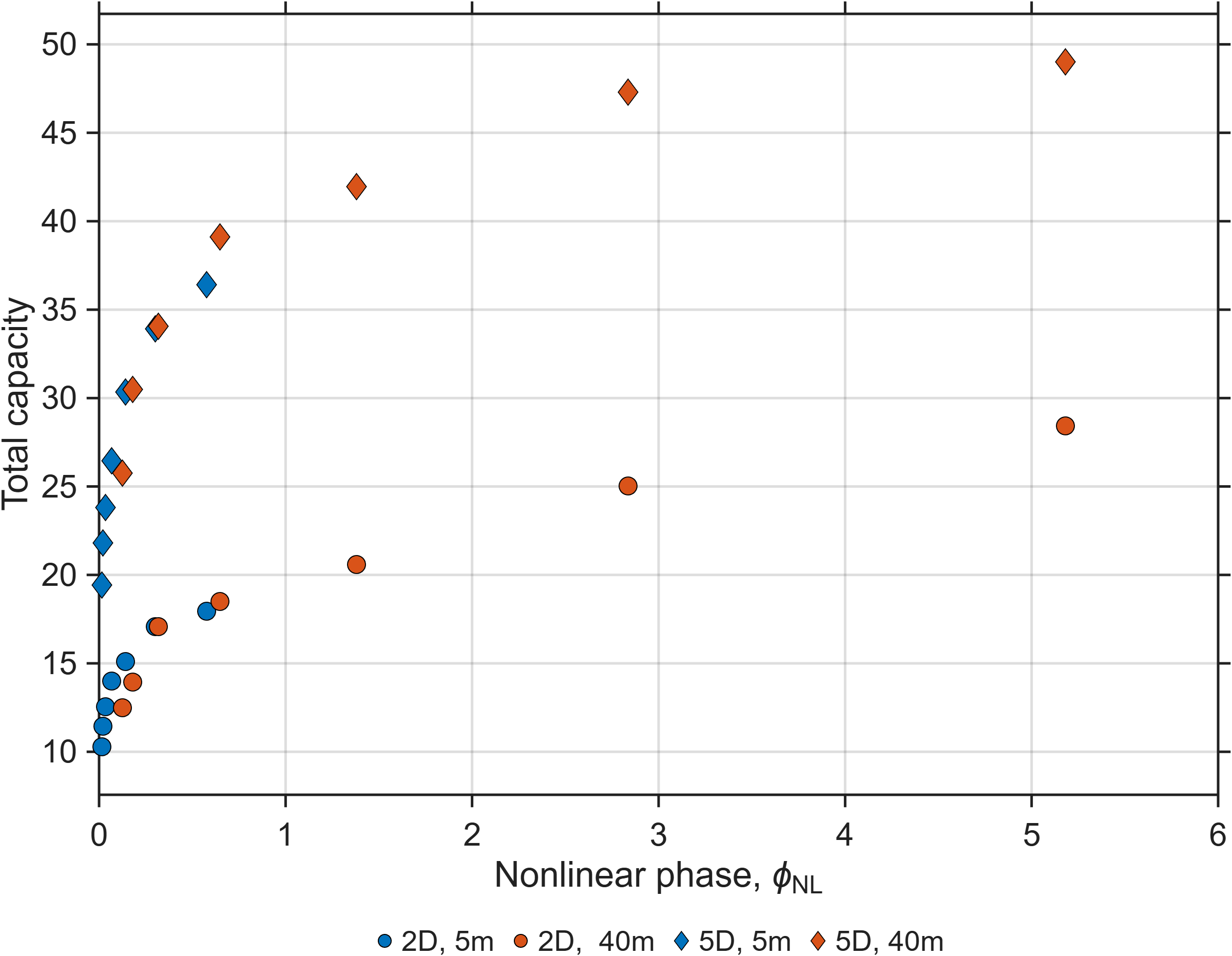}
    \caption{Total capacities as a function of nonlinear phase $\phi_{NL}$, for 2-dimensional  and 5-dimensional inputs. The peak powers used are in the range 2.6 W - 107 W, and the fiber propagation lengths are 5m and 40m}
    \label{Fig:TotalCapPhiNL}
\end{figure}

\subsection{Capacity and machine learning tasks}

\begin{figure}[H]
    \centering
    \includegraphics[width=1.0\textwidth]{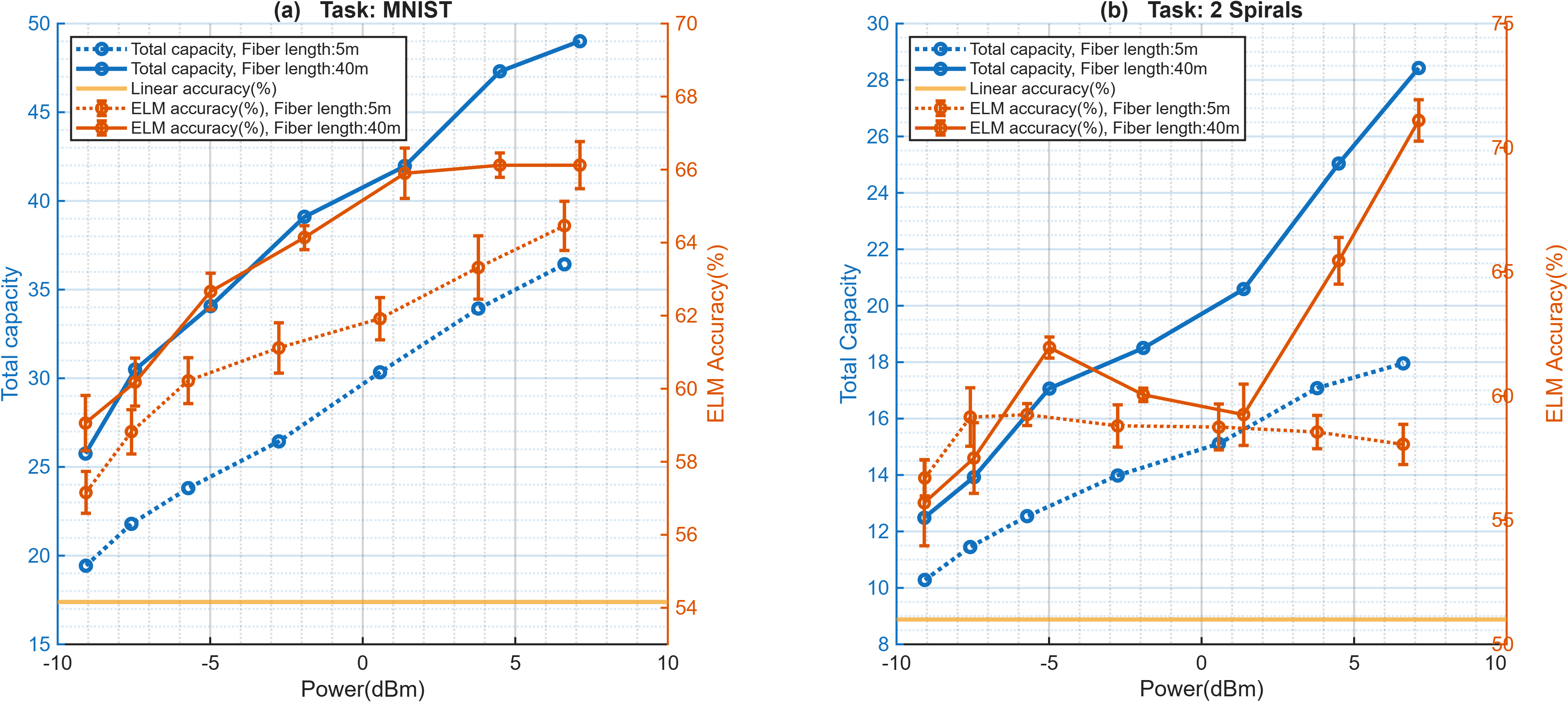}
    \caption{Capacities at different laser powers and fiber lengths compared with accuracies on machine learning tasks. (a): Total capacity for 5-dimensional inputs compared with the Photonic ELM accuracy of the dimensionality-reduced MNIST task.(b): Total capacity for 2-dimensional inputs compared with the Photonic ELM accuracy of the 2 Spirals task. The accuracy obtained from a linear model is also plotted for reference. The error bars on the accuracy are standard errors from the 5-fold cross-validation. 
   The total capacities are the same as those reported in  
Fig \ref{Fig:TotalCapPhiNL}.}
\end{figure}

We compare the system's total capacity with the accuracies achieved on machine learning tasks to assess how capacity is related to the performance of a physical computing system on real-world learning problems. We consider two benchmark tasks matched to our input dimensionalities.

For the \(2\)-dimensional input setting, we use the Two Spirals task \cite{LangWitbrock1988}, a binary classification task in which the goal is to predict which of two interleaving spirals a point belongs to based on its \((x,y)\) coordinates. 
For the 5~m fiber length, the accuracy remains in the range of  \(\sim 50-60\%\) for all input powers.
For the
 40~m fiber length, the accuracy exhibits a clear improvement above about 1~dBm of input power, 
 reaching \(\sim 70\%\) accuracy at around 7~dBm. Around 1 dBm is where the capacity terms with degree $\geq 6$ start becoming significant. So, it’s reasonable to assume this task benefits from (or even requires) these higher-order functional forms, and that increasing the power enables the system to compute them, which in turn improves the accuracy.

For the 5-dimensional input setting, we use the 5-feature MNIST digit-classification task. Here, we see an even clearer correspondence between information capacities and machine learning accuracies. As the input power increases, both the total capacity and the accuracy of the task increase. The 5 m and 40 m capacity curves increase with almost the same slope, but the 5 m curve consistently lies below the 40 m curve across the power range, indicating that the longer fiber length yields higher capacities overall. The corresponding accuracies follow the same trend, indicating that machine learning performance is strongly correlated with capacities.

It must be noted that the total capacity does not always correlate well with task performance. Indeed, not all capacity terms are necessary for carrying out a given machine learning task. Therefore, an increase or decrease in the nonessential capacities will not change the performance on the task. In the context of reservoir computers, the correlation between task performance and capacities has been studied previously in \cite{grigoryeva2015optimal,goldmann2020deep,infocap_task_specific}.

\subsection{Comparison with factor analysis}

In this section, we compare capacity with another metric for benchmarking physical computing systems: factor dimensionality. Principal Component Analysis (PCA) and Factor Analysis are standard techniques for estimating the effective dimensionality or the number of significant independent factors present in a dataset. These methods typically rely on singular value decomposition, after which only a subset of factors is retained based on the magnitude of the corresponding singular values. In \cite{malinowski}, the authors proposed a method for estimating the effective dimensionality by removing noise-dominated principal components. They introduced an indicator function whose minimum identifies the estimated number of significant factors. This metric was recently used in \cite{Skalli:22} to estimate the effective dimensionality of a photonic computing system.

We compute the indicator function using the same dataset employed for estimating the capacities. However, the analysis in \cite{malinowski} assumes that the noise level is uniform across all readouts, whereas in our experiments, the noise varies across the spectrum. To equalize the noise level across readouts, we normalize each readout by its corresponding standard deviation. The noise standard deviation for each spectral bin is estimated by operating the system with a constant input (spectral mask $u(\omega)=1$) for \(100\) trials and computing the standard deviation across those measurements. Let \(Z\) denote the noise-normalized data matrix, and let \(\lambda_i\) denote the eigenvalues of \(Z^T Z\), ordered in decreasing magnitude such that \(\lambda_1 > \lambda_2 > \cdots\). The indicator function \(\text{IND}\), expressed as a function of the number of factors \(\kappa\), is then given by:

\begin{equation}
    \text{IND}(\kappa)=\frac{1}{(K-\kappa)^2} \Biggl[  \frac{\sum_{j=\kappa+1}^K \lambda_j}{N(K-\kappa)}\Biggr]^{1/2}
    \label{Eq:INDfct}
\end{equation}
where, $K$ is the number of readouts and $N$ is the number of samples.
The effective dimensionality of the system is taken to be the minimum of \(\text{IND}(\kappa)\), if such a minimum exists.
An example of experimentally measured indicator function is given in Fig. \ref{Fig:IF}(a).

Comparing the total capacity and factor dimensionality under identical experimental conditions, see Fig. \ref{Fig:IF}(b), we observe a clear correlation between the two metrics. This 
confirms that the 
total capacity can serve as a reliable measure for the effective dimensionality of the system.

Computing the IPC is more data-intensive and computationally demanding than computing the factor dimensionality using the indicator function Eq. \eqref{Eq:INDfct}. 
However, IPC offers two important advantages over the factor dimensionality. First, the factor dimensionality only provides the number of principal factors, while the full capacity profile over a complete basis provides deeper insights into the information processing characteristics of the system. Second, an accurate estimate of the number of factors requires some prior knowledge of the noise in the system, whereas the computation of IPC does not rely on such an assumption.

\begin{figure}[H]
    \centering
    \includegraphics[width=1.0\textwidth]{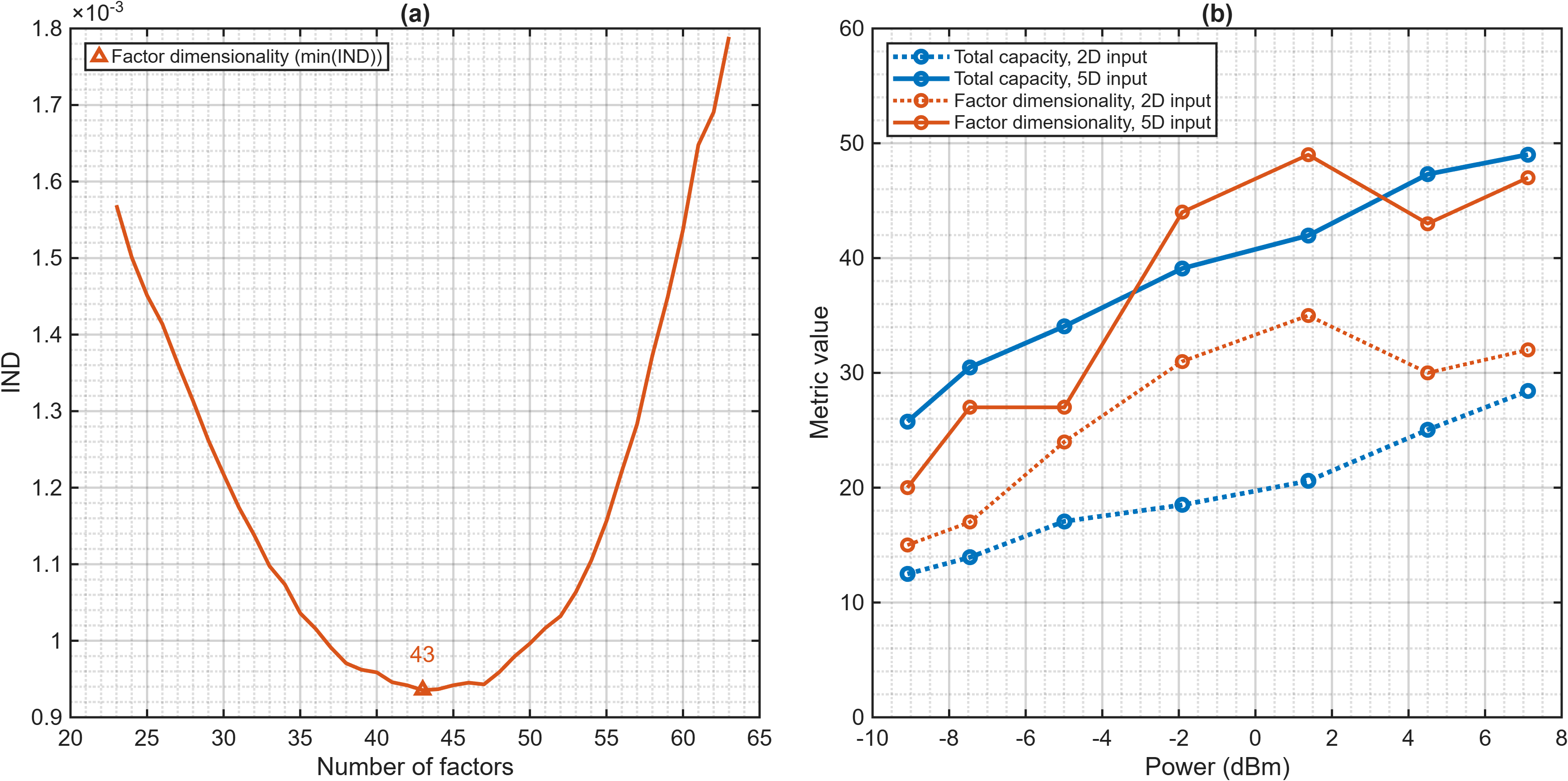}
    \caption{ Estimating effective dimensionality using factor analysis and IPC. (a) Experimentally measured indicator function and its minimum for 5D inputs when average power=4.5 dBm. (b) Total Capacity and Factor Dimensionality compared for both 2D and 5D input at different output average powers.  The length of the fiber spool is  40m.   The total capacities are the same as those reported in  
Fig \ref{Fig:TotalCapPhiNL}. }
    \label{Fig:IF}
\end{figure}

\section{Conclusion}
In the present work we established a theoretical basis for applying the Information Processing Capacity (IPC) to stationary systems. This extends the applicability of IPC from the time-dependent systems, corresponding to reservoir computers, for which it was originally introduced \cite{jaeger2002short,dambre2012information}, to the stationary case, corresponding to Extreme Learning Machines. 

We showed that noise inevitably reduces the IPC.
We introduced data-efficient estimation procedures that correct finite-sample overestimation, enabling more reliable capacity measurements under experimental constraints. Using a photonic ELM based on spectral encoding, fiber propagation, and intensity detection, we showed that controllable physical parameters -input power and fiber length- systematically reshape the capacity distribution, with increased nonlinearity promoting higher-degree components. As an illustrative application of the IPC, we used our experimental system on the  Two Spirals and PCA-reduced MNIST tasks. We found that task performance correlates to some extent with total capacity, demonstrating its predictive potential. We also used the total capacity as a measure of the effective dimensionality of our experimental system, and showed that it correlates strongly with an alternative measure of effective dimensionality based on factor analysis. These results demonstrate the potential of  IPC as a task-independent metric for evaluating and comparing the computational capabilities of stationary physical systems.

\section*{Acknowledgments}
SM would like to thank Daniel Brunner, André Rhöm, and Anas Skalli for insightful discussions. 
This research was supported by the F.R.S.-FNRS CDR J.0143.24
and by the FWO and F.R.S.-FNRS Excellence of Science (EOS) program grant 40007536.

\section*{Data availability}
The data that support the findings of this article are available from the corresponding author upon reasonable request. The code used in this study is openly available at \cite{rahul2026infoprocap}
\bibliographystyle{ieeetr}
\bibliography{references}

\end{document}